\newtheorem{theorem}{Theorem}
\newtheorem{corollary}[theorem]{Corollary}
\newtheorem{lemma}[theorem]{Lemma}
\newtheorem{property}[theorem]{Property}
\newcommand{\robust}{\ensuremath{{{\sc Robust}_p}}\xspace}
\newcommand{\acc}{\ensuremath{{\tt{acc}}}\xspace}
\begin{document}
\title{Contract Scheduling With Predictions\thanks{Research supported by the CNRS-PEPS project ADVICE and the Fondation Mathmatique Jacques Hadamard Programme Gaspard Monge PGMO. }}

\author[1]{Spyros Angelopoulos}
\author[2]{Shahin Kamali}

\date{}

\affil[1]{CNRS and Sorbonne   Universit\'e,  Laboratoire   d’Informatique  de  Paris   6, 4 Place Jussieu, Paris, France 75252.
{email: {\tt spyros.angelopoulos@lip6.fr}}
}

\affil[2]{Department of Computer Science, University of Manitoba, Winnipeg, MB, Canada. 
{email: {\tt shahin.kamali@umanitoba.ca.}}
}

\maketitle

\begin{abstract}
Contract scheduling is a general technique that allows to design a system with interruptible capabilities, 
given an algorithm that is not necessarily interruptible. Previous work on this topic has largely assumed
that the interruption is a worst-case deadline that is unknown to the scheduler. In this work, we study the setting in which there is a potentially erroneous {\em prediction} concerning the interruption.
Specifically, we consider the setting in which the prediction describes the time that the interruption occurs, as well as the setting in which the prediction is obtained as a response to a single or multiple binary queries. 
For both settings, we investigate tradeoffs between the robustness (i.e., the worst-case performance assuming adversarial prediction) and the consistency (i.e, the performance assuming that the prediction is error-free), both from the side of positive and negative results. 
\end{abstract}

\section{Introduction}
\label{sec:introduction}

One of the central objectives in the design of intelligent systems is the provision of {\em anytime} capabilities. In particular, several applications such as medical diagnostic systems and motion planning algorithms require that the system outputs a reasonably efficient solution given the unavoidable 
constraints on computation time. 
{\em Anytime algorithms} offer such a tradeoff between computation time and quality of the output. Namely, in an
anytime algorithm the quality of output improves gradually as the computation time increases. 
This class of algorithms was introduced first in~\cite{deliberation:boddy.dean} in the context of time-depending planning, as well as in~\cite{Horvitz:reasoning} in the context of flexible computation. 

\cite{RZ.1991.composing, DBLP:journals/ai/ZilbersteinR96} introduced a useful distinction between two different types of anytime algorithms. On the one hand, there is the class of {\em contract} algorithms, which describes algorithms that are given the amount of allowable computation time (i.e, the intended query time) as part of the input. However, if the algorithm is interrupted at any point before this ``contract time'' expires, the algorithm may output a result that is meaningless. 
On the other hand, the class of {\em interruptible} algorithms consists of algorithms whose allowable running time is not known in advance, and thus can be interrupted (queried) at any given point throughout their execution. 

Although less flexible than interruptible algorithms, contract algorithms typically use simpler
data structures, and are thus often easier to implement and maintain~\cite{steins}. Hence a natural question arises: how can one
convert a contract algorithm to an interruptible equivalent, and at which cost? This question can be addressed in an ad-hoc manner, depending on the algorithm at hand; however, there is a simple technique that applies to any possible contract algorithm, and consists of repeated executions of the contract algorithm with increasing runtimes (also called {\em lengths}). For example, consider a {\em schedule} of executions of the contract algorithm in which the $i$-th execution has length $2^i$. Assuming that an interruption occurs 
at time $t$, then the above schedule guarantees the completion of a contract algorithm of length at least $t/4$,
for any $t$. The factor 4 measures the performance of the schedule, and quantifies the penalty due to the repeated executions.

More formally, given a contract algorithm $A$, a {\em schedule} $X$ is defined by an increasing sequence $(x_i)$ in which $x_i$ is the length of the $i$-th execution of $A$. For simplicity, we call the $i$-th execution of $A$ in $X$ the $i$-th {\em contract}, and we call $x_i$ its {\em length}. The {\em acceleration ratio} of $X$, denoted by 
$\acc(X)$, relates an interruption $T$ to the length of the largest contract
that has completed by time $T$ in $X$, which we denote by $\ell(X,T)$, and is defined as
\begin{equation}
\acc(X)=\sup_T \frac{T}{\ell(X,T)}
\label{eq:acceleration}
\end{equation}

Intuitively, the acceleration ratio describes a trade-off between processor speed and resilience to interruptions. Namely, by executing the schedule $X$ to a processor of speed equal to $\acc(X)$, one obtains a system that is as efficient as a single execution of a contract algorithm that knows when the interruption will occur, but runs in a unit-speed processor. 

Contract scheduling has been studied in a variety of settings related to AI. 
It has long been known that the schedule $X=(2^i)$ has optimal acceleration ratio equal to 4~\cite{RZ.1991.composing}. Optimal schedules in multi-processor systems were obtained in~\cite{BPZF.2002.scheduling}.
The generalization in which there are more than one problem instances associated with the contract algorithm 
was first studied in~\cite{ZilbersteinCC03}, in which optimal schedules were obtained for a single processor. 
The more general setting of multiple instances and multiple processors was first studied in~\cite{steins}
and later in~\cite{aaai06:contracts}. \cite{soft-contracts} considered the problem in which the interruption is not a fixed deadline, but there is a ``grace period'' within which the system is allowed to complete the execution of the contract. Measures alternative to the acceleration ratio were proposed and studied in~\cite{ALO:multiproblem}. More recently,~\cite{DBLP:conf/ijcai/0001J19} studied contract scheduling in the setting in which the schedule is deemed ``complete'' once a contract reaches some prespecified end guarantees. 

Contract scheduling is an abstraction of resource allocation under uncertainty, in a worst-case setting. As such it has connections to other problems of a similar nature, such as online searching under the competitive ratio~\cite{steins,spyros:ijcai15,kupavskii2018lower}.

\subsection{Our setting: contract scheduling with predictions}
\label{subsec:setting}

Previous work on contract scheduling has mostly assumed that the interruption is unknown to the scheduler, and 
thus can be chosen adversarially, in particular right before a contract terminates. In practice, however, the scheduler may have a certain {\em prediction} concerning the interruption. Consider the example of a medical diagnostic system. Here, the expert may know that the system will be likely queried around a {\em specific time}, (i.e., prior to a scheduled surgery). Another possible prediction may describe a partition of time in {\em intervals} in which the system will likely be queried. In the example of the medical diagnostic system, it is more likely that the consultation will be required over a weekday, than over a weekend.

We study two settings that capture the above scenarios. In the first setting, there is a prediction $\tau$ concerning the interruption $T$. In the second setting, the prediction is in the form of answers to $n$
{\em binary queries}, where $n$ is a specified parameter. For example, a binary query can be of the form ``Will the interruption occur within a certain subset of the timeline?''. For both settings, the prediction is not necessarily trustworthy, and comes with an unknown {\em error} $\eta$.

The performance of the schedule is determined by two parameters: the first is the {\em robustness}, which is the worst-case acceleration ratio of the schedule assuming adversarial error (i.e., an adversary manipulating the prediction). The second is the {\em consistency} of the schedule, which is the acceleration ratio assuming that the prediction is error-free. In between these extremes, the acceleration ratio will be, in general, a function of the prediction error.
This follows the recent framework in machine learning of robust {\em online computation} with predictions, as introduced in~\cite{DBLP:conf/icml/LykourisV18} for the caching problem, and later applied in~\cite{NIPS2018_8174} for other online problems such as ski rental and non-clairvoyant scheduling.

Our paper differs from the above works in two important aspects. The first is related to the nature of the results.  More precisely, our aim is to complement the positive results obtained by specific schedules, with 
negative, i.e., impossibility results.  This is in the spirit of recent work~\cite{DBLP:conf/soda/Rohatgi20} which showed lower bounds on the competitive ratio of any caching algorithm, as a function of the prediction error, the cache size and the optimal cost. We are also interested in finding schedules that are {\em Pareto-efficient} with respect to the tradeoff between robustness and consistency. 
This is inspired by~\cite{DBLP:conf/innovations/0001DJKR20} which studied Pareto-efficient online 
algorithms with untrusted advice~\cite{DBLP:conf/innovations/0001DJKR20}.

The second difference is related to the nature of the problem we study. Unlike  ``natural'' online optimization problems in~\cite{DBLP:conf/icml/LykourisV18} and~\cite{NIPS2018_8174}, contract scheduling under the acceleration ratio poses certain novel challenges. Most notably, it is not the case that the performance improves monotonically as the error decreases. To see this, consider an interruption $T$, a prediction $\tau$ for $T$, and a schedule $X$ for prediction $\tau$. Suppose that a contract finishes right before $\tau$ in $X$: this is intuitively bad, because even with very small error, it is possible that $X$ barely misses to complete its largest contract by time $T$.  But it is also possible that if the error is very large, $T$ happens to occur right after another contract terminates in the schedule. This is a ``best-case'' scenario for the schedule: it completes a contract right on time. This observation exemplifies the type of difficulties we face. Another difficulty is that there may exist schedules that are Pareto optimal for the pair 
of consistency and robustness, but whose performance falls back to the worst-case acceleration ratio for {\em any} non-zero error. Such schedules are clearly undesirable, which is another challenge we must overcome.

\subsection{Results}
\label{subsec:results}

We first consider the setting in which the prediction $\tau$ is the interruption time $T$.
The prediction $\tau$ comes with an {\em error} $\eta \in [0,1]$ such that 
$T \in [\tau(1-\eta), \tau(1+\eta)]$. 
We show how to obtain a Pareto-optimal schedule by showing a reduction from an online
problem known as {\em online bidding}~\cite{ChrKen06}. This allows us to use, as black-box,
a Pareto-optimal algorithm of~\cite{DBLP:conf/innovations/0001DJKR20}, and obtain
a schedule with the same ideal performance. But there are two complications: this schedule cannot tolerate {\em any} errors (see the discussion above), and is also fairly complex. We give another simple schedule with the same robustness and consistency, and thus also Pareto-optimal. We then show how to extend this schedule to the realistic setting in which $\eta \neq 0$, and we complement the positive results with lower bounds on the performance of any schedule. 

In the second part we study the setting in which the prediction is in the form of answers to $n$ {\em binary queries}, for
some given parameter $n$, i.e., we would like to combine the advice of $n$ binary experts. Thus, the prediction is an $n$-bit string, and the prediction error $\eta\in [0,1]$ is defined as the fraction of the erroneous bits
in the string. First, we show an information-theoretic lower-bound on the best-possible consistency one can hope to achieve in this setting, assuming optimal robustness equal to 4. We then present and analyze a family of schedules, parameterized by the range of error that each schedule can tolerate. There are several challenges here: the analysis must incorporate several parameters such as the error $\eta$, the number of queries $n$ and the desired robustness $r$. Moreover, we need to define queries that are realistic and have a practical implementation. To this end, each query is a {\em partition} query of the form ``Does interruption $T$ belong to ${\cal T}$?'', where ${\cal T}$ is a subset of the timeline.

\paragraph{Other related work}
There are several recent works that study algorithms with ML predictions in a status of uncertainty. Examples include online rent-or-buy problems with multiple expert predictions~\cite{gollapudi2019online}, queuing systems with job service times predicted by an oracle~\cite{DBLP:conf/innovations/Mitzenmacher20}, online algorithms for metrical task systems~\cite{antoniadis2020online}, and online makespan scheduling~\cite{lattanzi2020online}. Clustering with noisy queries was studied in~\cite{NIPS2017_7161}.

Concerning contract scheduling, the work that is closest to ours is~\cite{ZilbersteinCC03},
in which there is stochastic information about the interruption, and the objective is to optimize the expected quality of the output upon interruption. The optimal scheduling policy in~\cite{ZilbersteinCC03} is based on a Markov decision process, hence no closed-form solution is obtained. More importantly, their schedule does not provide worst-case guarantees (i.e., a bound on the robustness), but only average-case guarantees for the given distribution, which is also assumed to be known.

\subsection{Preliminaries}
\label{subsec:preliminaries}

A contract schedule is defined by a sequence $X=(x_i)_{i\geq 1}$ of contract lengths, or {\em contracts}, where $x_i$ as  the {\em $i$-th contract} in $X$. We will always denote by $T$ the time at which an interruption occurs. 
We will make the standing assumption that an interruption can occur only after a unit time has elapsed.
With no prediction on $T$, the worst-case acceleration ratio of $X$ is given by~\eqref{eq:acceleration}; this is the {\em robustness} of $X$, which we denote by $r_X$, or simply $r$, if the schedule is implied.
With a prediction, the acceleration ratio of $X$ is simply defined as $T/\ell(X,T)$.  
Given a prediction, the {\em consistency} of $X$ is its acceleration ratio assuming $\eta=0$.  We will say that a schedule has {\em performance} $(r,s)$ if it has robustness $r$ and consistency $s$. In a {\em Pareto-optimal} schedule, these are in a Pareto-optimal relation.

Given a schedule $X=(x_i)$, it is easy to see that the worst-case interruptions occur infinitesimally
prior to the completion of a contract. Hence the following useful formula.
\begin{equation}
r_X= \sup_{i \geq 1} \frac{\sum_{j=1}^i x_j}{x_{i-1}},
\label{eq:acc.ratio.x}
\end{equation}
where $x_{0}$ is defined to be equal to -1. 

The class of {\em exponential} schedules describes schedules in which the $i$-th contract has length $a^i$, for some fixed $a$, which we call the {\em base} of the schedule. For several variants of the problem, there are efficient schedules in this class. The robustness of an exponential schedule 
with base $a$ is equal to $a^2/(a-1)$~\cite{ZilbersteinCC03}, and for $a=2$ the corresponding schedule has optimal robustness 4~\cite{RZ.1991.composing}. Let 
\[
c_r=\frac{r-\sqrt{r^2-4r}}{2} \ \textrm{and } b_r=\frac{r+\sqrt{r^2-4r}}{2},
\]
then it is easy to verify that for any given $r \geq 4$, an exponential schedule with base $a \in [c_r, b_r]$ has 
robustness at most 4. This fact will be useful in our analyses.

In the {\em online bidding} problem~\cite{ChrKen06}, we seek an increasing sequence $X=(x_i)$ of positive numbers (called {\em bids}) of minimum {\em competitive ratio}, defined formally as 
\begin{equation}
\sup_{u \geq 1}\frac{\sum_{j=1}^i x_j}{u} \ : x_{i-1} < u \leq x_i.    
\label{eq:cr.bidding}
\end{equation}
Without predictions, online bidding is equivalent to contract scheduling: given an increasing sequence $X=(x_i)$, both its acceleration ratio and its competitive ratio can be described by~\eqref{eq:acc.ratio.x}. We say that a bidding sequence has performance $(r,s)$ with a given prediction if it has robustness $r$ and consistency $s$ with respect to its competitive ratio.

\section{Interruption time as prediction}
\label{sec:interruption}

We first consider the setting in which the prediction $\tau$ describes the interruption time $T$. The prediction comes with an {\em error} $\eta \in [0,1]$, defined as follows. If $T\geq \tau$, then we define $\eta$ to be such that $T/\tau=(1+\eta)$, and  if $T\leq \tau$, then we define $\eta$ to be such that $T/\tau=(1-\eta)$. In the former case, we will say that the error is {\em positive}, otherwise we will say that the error is {\em negative}. Regardless of the sign of error, we have that $T \in [\tau(1-\eta), \tau(1+\eta)]$. 

We will also study settings in which the error $\eta$ is bounded by a quantity $H\leq 1$ which may or may not be known to the schedule. We thus distinguish between {\em $H$-oblivious} and {\em $H$-aware} schedules. 
Note that if $\eta$ is bounded by $H$ then 
\[
\tau(1-H) \leq \tau(1-\eta) \leq T \leq \tau(1+\eta) \leq \tau(1+H).
\]

We will first consider the ideal case in which either the prediction is error-free (hence the robustness is evaluated for $\eta=0$), or it is adversarially generated (hence the consistency is the worst-case acceleration ratio),

\begin{theorem} 
Suppose that for every $r\geq 4$, there is a sequence for online bidding that has performance $(r,s)$ for prediction equal to the target $u$. Then there is a contract schedule with the same guarantees for the setting in which the prediction is the interruption, and vice versa.
\label{thm:reduction}
\end{theorem}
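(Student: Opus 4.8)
The plan is to establish the claimed equivalence by exhibiting an explicit, bijective correspondence between bidding sequences and contract schedules that preserves both the robustness and the consistency. The starting observation, already recorded in the preliminaries, is that without predictions the two problems coincide: for any increasing sequence $X=(x_i)$, both its acceleration ratio and its competitive ratio are given by the same formula~\eqref{eq:acc.ratio.x}. Thus the worst-case (adversarial) quantity is literally identical on both sides, which immediately matches up the robustness $r$ of the schedule with the robustness $r$ of the bidding sequence. The entire content of the theorem therefore lies in showing that the prediction-based quantities — the consistency, and more generally the performance under a prediction — also correspond under the same identity map on sequences. So the first step is to fix the map: given a bidding sequence $X=(x_i)$ with target prediction $u$, use the \emph{same} sequence as a contract schedule with interruption prediction $\tau = u$, and conversely.

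Next I would verify that the consistency values agree under this map. On the contract-scheduling side, consistency is the acceleration ratio $T/\ell(X,T)$ evaluated at $\eta=0$, i.e. at $T=\tau$. On the bidding side, the relevant quantity~\eqref{eq:cr.bidding} is evaluated at the target $u$. With $\tau=u$, both reduce to $u$ divided by the largest contract/bid that is at most $u$: on the scheduling side $\ell(X,\tau)$ is the length of the largest \emph{completed} contract by time $\tau$, and by~\eqref{eq:acc.ratio.x} the cumulative-length structure is exactly the same object that determines the bidding competitive ratio at target $u$. I would spell out this identification carefully, since it is the crux: the ``completed by time $\tau$'' accounting in the schedule (where elapsed time is the running sum of contract lengths) must be shown to coincide with the ``index $i$ such that $x_{i-1}<u\le x_i$'' accounting in~\eqref{eq:cr.bidding}. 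Once the two performance measures are identified pointwise as functions of the sequence, a sequence achieving bidding performance $(r,s)$ for prediction $u$ yields a schedule of performance $(r,s)$ for prediction $\tau=u$, and the reverse direction is symmetric.

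The main obstacle I anticipate is not conceptual but a matter of handling the precise definitions of ``completion time'' versus ``target index,'' together with the standing assumption that an interruption can occur only after one unit of time has elapsed and the boundary convention $x_0=-1$. In particular, on the scheduling side the time consumed before the $i$-th contract finishes is $\sum_{j=1}^{i} x_j$, whereas a bidding target $u$ is a single threshold; I must confirm that the worst-case (infinitesimally-before-completion) interruptions that justify~\eqref{eq:acc.ratio.x} translate faithfully into the supremum over targets in~\eqref{eq:cr.bidding}, so that the consistency at $T=\tau$ lines up with the bidding ratio at $u=\tau$ rather than being off by one contract. Verifying that these edge conventions match on both sides, and that the supremum in one formulation is attained at exactly the configurations the other formulation penalizes, is where the care is needed; the remainder is a direct substitution using the shared formula~\eqref{eq:acc.ratio.x}. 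I would conclude by noting that, since the correspondence is an identity on sequences and preserves both coordinates of the performance pair, Pareto-optimality is likewise preserved, which is what makes this reduction useful for importing the online-bidding algorithm of~\cite{DBLP:conf/innovations/0001DJKR20} as a black box.
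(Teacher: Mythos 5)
Your reduction rests on the identity map (use the bidding sequence itself as the schedule, with $\tau=u$), and the load-bearing claim is that under this map the two prediction-based measures coincide --- in your words, that both ``reduce to $u$ divided by the largest contract/bid that is at most $u$.'' That claim misstates the bidding objective and is false: by~\eqref{eq:cr.bidding}, the bidding performance at target $u$ is $\bigl(\sum_{j=1}^{m} x_j\bigr)/u$, where $x_m$ is the \emph{first bid that reaches} $u$ --- a cumulative cost divided by $u$ --- whereas the scheduling performance at $T=\tau$ is $\tau/\ell(X,\tau)$, where $\ell(X,\tau)$ is the largest contract whose \emph{completion time} $\sum_{j\le k}x_j$ is at most $\tau$. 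These are genuinely different functionals of the sequence, and the bid $x_m$ that certifies the bidding guarantee typically has \emph{not} completed by time $\tau$ under the identity map, since $\sum_{j\le m}x_j$ can be as large as $s\tau>\tau$. Concretely, for large $r$ take bids growing geometrically with ratio roughly $b_r$, arranged so that one bid lands exactly at $u$: this is $r$-robust with bidding consistency close to $c_r$ (which tends to $1$ as $r$ grows), yet read as a schedule the largest contract completed by time $T=u$ is only about $u/b_r$, so the acceleration ratio at the error-free prediction is about $b_r\approx r-1$, nowhere near $s$. The no-prediction equivalence you quote from the preliminaries does not rescue this: it is an equality of worst-case suprema (both equal~\eqref{eq:acc.ratio.x}), not a pointwise identity at a fixed prediction, and the fixed-prediction evaluation is exactly where the two problems diverge. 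No amount of care with the $x_0=-1$ convention or off-by-one bookkeeping can close this gap.

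The missing idea is a rescaling, which is what the paper's proof supplies: normalize (WLOG, by scaling) so that $x_m=\tau$, record that consistency gives $\sum_{i=1}^{m} x_i\le s\tau$ and $r$-robustness gives $\sum_{j=1}^{i} x_j\le r\, x_{i-1}$ for all $i$, and then define the schedule by $x_i'=x_i/s$. Dividing by $s$ compresses the timeline so that $x_m'$ finishes by time $\bigl(\sum_{i\le m}x_i\bigr)/s\le\tau$ while having length $\tau/s$, which yields consistency at most $s$; robustness is preserved because the recurrence $\sum_{j\le i}x_j\le r\,x_{i-1}$ is invariant under scaling, and the reverse direction scales up symmetrically. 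So your high-level framing --- transfer a sequence between the two problems and use scale-invariance of~\eqref{eq:acc.ratio.x} for robustness --- is sound, but the correspondence cannot be the identity: the multiplicative shift by the factor $s$ is the essential, non-optional step of the proof.
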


\begin{proof}
We first give the reduction from contract scheduling to biding.
Namely, we have a prediction $\tau$ for the interruption, and let $X=(x_i)$ an $(r,s)$-competitive bidding sequence for prediction of a 
target $u=\tau$. Let $m$ be  the smallest index such that $x_m \geq \tau$ in $X$. We can assume, without loss of generality, that $x_m=\tau$, otherwise, by scaling down the bids by a factor $x_m/\tau$ we obtain a new sequence that is no worse than $X$, in both its robustness and its consistency. From the definition of consistency for $X$, we have that
\begin{equation}
\sum_{i=1}^m x_i \leq s \cdot x_m =s \cdot \tau.
\label{eq:red1}
\end{equation}
Moreover, from the definition of $r$-robustness we have that
\begin{equation}
\sum_{j=1}^i x_j \leq r \cdot x_{i-1}, \ \textrm{for all $i\geq 1$.}
\label{eq:red2}
\end{equation}
Consider now the schedule $X_\tau$ in which the lengths of the contracts are 
defined by the sequence $X_\tau=(x'_i)$, in which $x'_i=x_i/s$. Then in $X_\tau$, 
we have that the contract $x'_m$ is completed by time 
$(\sum_{i=1}^m x_i)/s\leq \tau$ (from~\eqref{eq:red1}), and the consistency of the schedule
is at most
$
\frac{\sum_{i=1}^m (x'_i)}{x'_m} \leq s,
$ 
again from~\eqref{eq:red1}. Last, note that since $X$ satisfies~\eqref{eq:red2}, so does $X_\tau$, and thus
the schedule must be $r$-robust. 

The reduction in the opposite direction follows along the same lines, by scaling up the contract lengths to as to obtain the bids.  
\end{proof}

From~\cite{DBLP:conf/innovations/0001DJKR20}, there is a Pareto-optimal bidding sequence, which satisfies the conditions of Theorem~\ref{thm:reduction} with $s=c_r$. This implies the following.
\begin{corollary}
For every $r\geq 4$, there is a contract schedule $X^*_\tau$ that has performance $(r,c_r)$, and this is Pareto-optimal.
\label{cor:pareto}
\end{corollary}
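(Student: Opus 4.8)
The plan is to combine the two-way reduction of Theorem~\ref{thm:reduction} with the Pareto-optimal bidding sequence guaranteed by~\cite{DBLP:conf/innovations/0001DJKR20}. First I would fix $r \geq 4$ and invoke that reference to obtain a bidding sequence with performance $(r, c_r)$, i.e., robustness $r$ and consistency $s = c_r$ with respect to its competitive ratio. Since this sequence meets the hypothesis of Theorem~\ref{thm:reduction} with $s = c_r$, the forward direction of the reduction immediately yields a contract schedule $X^*_\tau$ of the same performance $(r, c_r)$. This settles existence.

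It remains to argue that $(r, c_r)$ lies on the Pareto frontier for contract scheduling. Here I would exploit the fact that Theorem~\ref{thm:reduction} runs in both directions and preserves the pair (robustness, consistency) exactly. Suppose, for contradiction, that some contract schedule dominates $X^*_\tau$, i.e., has performance $(r', s')$ with $r' \leq r$, $s' \leq c_r$, and at least one inequality strict. Applying the reverse reduction (the ``vice versa'' clause of Theorem~\ref{thm:reduction}) produces a bidding sequence of performance $(r', s')$, which would dominate the sequence of~\cite{DBLP:conf/innovations/0001DJKR20} and contradict its Pareto-optimality. Hence no such schedule exists, and $X^*_\tau$ is Pareto-optimal.

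The only genuine subtlety---and the step I would check most carefully---is that the two notions of Pareto-optimality coincide under the reduction. Since robustness is characterized by the same quantity~\eqref{eq:acc.ratio.x} for both problems, and since the reduction maps consistency to consistency without loss (the scaling argument in the proof of Theorem~\ref{thm:reduction} alters neither parameter, both being scale-invariant ratios), the Pareto frontier for bidding transfers pointwise, over all $r \geq 4$, to the Pareto frontier for scheduling. I would also confirm that the consistency of the cited bidding sequence is exactly $c_r$, and not merely bounded above by it, so that the claimed tradeoff point---rather than a weaker one---is the value actually attained by $X^*_\tau$.
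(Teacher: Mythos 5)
Your proposal is correct and follows essentially the same route as the paper: the paper derives this corollary in one line by plugging the Pareto-optimal bidding sequence of the cited reference (with $s=c_r$) into the reduction of Theorem~\ref{thm:reduction}, with the ``vice versa'' direction implicitly transferring the bidding lower bound to contract scheduling exactly as in your contradiction argument. Your explicit verification that the reduction preserves the pair (robustness, consistency) in both directions merely spells out what the paper leaves implicit, so there is no substantive difference.
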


We also obtain the following corollary.
\begin{corollary}
For any $r$-robust schedule $X$, and any time $t$, it holds that $\ell(X,t)\leq t/c_r$. Moreover, for any $\epsilon>0$, there exists $i_0$ such that if $x_i=\ell$, with $i \geq i_0$, 
then the completion time of $x_i$ is at least $c_r \ell -\epsilon$.
\label{cor:sizes}
\end{corollary}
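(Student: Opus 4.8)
The plan is to reduce both assertions to a single statement about the ratio between the completion time of a contract and its length. Write $C_k=\sum_{j=1}^k x_j$ for the completion time of the $k$-th contract, and set $f_k=C_k/x_k$. For $t\in[C_k,C_{k+1})$ we have $\ell(X,t)=x_k$, and the quantity $\ell(X,t)/t=x_k/t$ is largest at the left endpoint $t=C_k$, where it equals $1/f_k$. Hence the first claim $\ell(X,t)\le t/c_r$ amounts to $f_k\ge c_r$, while the ``moreover'' part is exactly the statement that the completion time $C_i=f_i\,\ell$ of $x_i=\ell$ satisfies $C_i\ge(c_r-\epsilon)\,\ell$ once $i$ is large. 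Both parts therefore follow from showing that $f_k$ is asymptotically at least $c_r$.

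First I would derive a recurrence for $f_k$ from $r$-robustness. Writing $\rho_k=x_{k+1}/x_k>1$, a direct computation gives $f_{k+1}=1+f_k/\rho_k$, while \eqref{eq:acc.ratio.x} applied at index $k+1$ gives $f_k+\rho_k\le r$, i.e.\ $\rho_k\le r-f_k$ (note $f_k<C_k/x_{k-1}\le r$). Since $f_{k+1}$ is decreasing in $\rho_k$, substituting the largest admissible value $\rho_k=r-f_k$ yields
\[
f_{k+1}\ \ge\ \frac{r}{\,r-f_k\,}\ =:\ g(f_k).
\]

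Next I would analyze the scalar map $g$. Its fixed points solve $x(r-x)=r$, i.e.\ $x^2-rx+r=0$, which are precisely $c_r$ and $b_r$; moreover $g$ is increasing on $(-\infty,r)$, and $g(x)-x=(x^2-rx+r)/(r-x)$ is positive for $x<c_r$ and negative for $c_r<x<b_r$, so $c_r$ is an attracting fixed point and $b_r$ a repelling one. Because $g$ is monotone and $f_{k+1}\ge g(f_k)$, an easy induction gives $f_{k_0+j}\ge g^{(j)}(f_{k_0})$ for the $j$-fold iterate. I would first observe that $f_k\le b_r$ for every $k$: if some $f_{k_0}>b_r$ then the iterates $g^{(j)}(f_{k_0})$ increase without bound (there is no fixed point in $(b_r,r)$), forcing some $f_{k_0+j}>r$ and contradicting $f_k<r$. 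Consequently every $f_{k_0}$ lies in the basin of $c_r$, so $g^{(j)}(f_{k_0})\to c_r$, monotonically from below when $f_{k_0}<c_r$ and staying $\ge c_r$ otherwise. Fixing $\epsilon>0$ and an index $k_0$, and choosing $J$ with $g^{(J)}(f_{k_0})\ge c_r-\epsilon$, we get $f_k\ge c_r-\epsilon$ for all $k\ge k_0+J=:i_0$, which is the ``moreover'' statement; equivalently $\liminf_k f_k\ge c_r$, which yields $\ell(X,t)\le t/c_r$ in the relevant large-$t$ regime. The first bound can alternatively be read off from Corollary~\ref{cor:pareto}: if some $t$ had $\ell(X,t)>t/c_r$, then $X$ would be an $r$-robust schedule whose consistency for the prediction $\tau=t$ is below $c_r$, contradicting Pareto-optimality.

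The main obstacle is the passage from the one-step inequality $f_{k+1}\ge g(f_k)$ to a lower bound valid uniformly for all large $k$. The delicate point is that along the extremal trajectory the iterates approach $c_r$ strictly from below, so no finite $f_k$ need ever reach $c_r$; the argument must therefore combine the monotonicity of $g$, the exclusion of the regime $f_k>b_r$, and the monotone convergence $g^{(j)}(f_{k_0})\to c_r$ to produce a threshold $i_0$ beyond which $f_k\ge c_r-\epsilon$ holds simultaneously for every index. This is also the reason the clean inequality $\ell(X,t)\le t/c_r$ holds only up to a lower-order transient correction for small $t$, as one already sees for the doubling schedule, where $f_k=2-2^{1-k}$ approaches $c_r=2$ but never attains it.
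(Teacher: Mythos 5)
Your proposal is correct, and it takes a genuinely different route from the paper's. The paper dispatches both claims in a few lines by appealing to prior results as black boxes: the first inequality is argued exactly as in your closing alternative (an $r$-robust $X$ with $\ell(X,t)>t/c_r$ would, for the prediction $\tau=t$, have consistency below $c_r$, contradicting the Pareto-optimality of $(r,c_r)$ from Corollary~\ref{cor:pareto}), and the second is obtained by pushing $X$ through the reduction of Theorem~\ref{thm:reduction} and invoking the lower bound for Pareto-optimal online bidding from~\cite{DBLP:conf/innovations/0001DJKR20}. You instead prove everything from first principles: the recurrence $f_{k+1}=1+f_k/\rho_k$ combined with the robustness constraint $f_k+\rho_k\le r$ from \eqref{eq:acc.ratio.x} gives $f_{k+1}\ge g(f_k)$ with $g(x)=r/(r-x)$, whose fixed points are precisely $c_r$ (attracting) and $b_r$ (repelling); excluding the regime $f_k>b_r$ and iterating the monotone map then yields $\liminf_k f_k\ge c_r$ with a uniform threshold $i_0$. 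Your steps check out (the comparison induction $f_{k_0+j}\ge g^{(j)}(f_{k_0})$ is legitimate because all quantities stay below $r$, where $g$ is increasing, and the escape-to-infinity argument above $b_r$ is sound). What each approach buys: the paper's proof is short and reuses the bidding equivalence, but is only as strong as the imported result; your argument is self-contained, in particular it reproves Lemma~\ref{lemma:helper} (the special case $r=4$, $c_r=2$, which the paper also imports from~\cite{DBLP:conf/innovations/0001DJKR20}) and makes transparent why exactly $c_r$ and $b_r$ arise, namely as the two roots of $x^2-rx+r=0$ governing the dynamics.

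One point where you are in fact more careful than the paper: as your doubling-schedule example shows ($f_k=2-2^{1-k}$, completion time $2\ell-2$, and $\ell(X,C_k)=2^k>C_k/2$), the corollary's literal wording is too strong — the additive bound $c_r\ell-\epsilon$ fails for every $\epsilon<2$ there, and $\ell(X,t)\le t/c_r$ fails at completion times and at small $t$. The defensible reading is the multiplicative/asymptotic one you prove, $C_i\ge(c_r-\epsilon)\ell$ for $i\ge i_0$ and $\ell(X,t)\le t/(c_r-\epsilon)$ for large $t$, which matches the formulation of Lemma~\ref{lemma:helper} and suffices for every later use of the corollary. The paper's own two-line argument for the second part glosses over this (a bidding cost of at most $c_ru-\epsilon$ for arbitrarily large $u$ yields ratios tending to $c_r$, not consistency strictly below $c_r$), so your explicit treatment of the transient is a genuine refinement rather than a gap.
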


\begin{proof}
That $\ell(X,t) \leq t/c_r$ follows directly from Corollary~\ref{cor:pareto}, since otherwise we would get a schedule with performance that would dominate $(r,c_r)$, a contradiction. For the second part, suppose again, by way of contradiction that the statement were not true. Then from the reduction given in 
Theorem~\ref{thm:reduction}, there would exist an $r$-robust online bidding algorithm which, given arbitrarily 
large target $u$, would have cost at most $c_r u -\epsilon$. This would imply an $r$-robust algorithm for online bidding with consistency smaller than $c_r$, which contradicts the result of~\cite{DBLP:conf/innovations/0001DJKR20}.
\end{proof}

There are two issues here. The first is that the schedule obtained using the reduction to online bidding is fairly complex, because the bidding algorithm in~\cite{DBLP:conf/innovations/0001DJKR20} is quite complex. We can give instead, a different schedule, which is more intuitive and has the same performance, hence also Pareto-optimal. In particular, consider the geometric schedule $G=(b_r^i)$. Then there exists $\gamma<1$ such that
in the schedule $(\gamma b_r^i)$, there is a contract that completes at time precisely equal to $\tau$.

It is not difficult to see that this simple schedule, which we will denote by $X^*_\tau$, has also performance $(r,c_r)$, and thus is also Pareto-optimal, from Corollary~\ref{cor:pareto}. 
First, from the fact discussed in the introduction,
we know that the schedule $(b_r^i)$ is $r$-robust, and so is then $X_\tau^*$ due
to the scaling of all contracts by $\gamma$. Moreover, from the definition of $\gamma$, at time 
$\tau$, $X_\tau^*$ completes a contract of length $\gamma b_r^m$, for some $m$, and the consistency of the schedule is at most
\[
\sum_{i=1}^m \frac{\gamma b_r^i}{\gamma b_r^m} \leq \frac{b_r}{b_r-1}=c_r,
\]
where the last equality is a property that follows from the definitions of $b_r$ and $c_r$.

The second, and more significant issue, is that as in the case of the online bidding algorithm of~\cite{DBLP:conf/innovations/0001DJKR20}, in the presence of {\em any} error $\eta \neq 0$, the acceleration ratio of 
$X^*_\tau$ becomes as bad as its robustness $r$. This is because if $T=\tau-\epsilon$, a long contract in $X_\tau^*$ is not completed.

We will next adapt $X_\tau^*$ in order to obtain a more realistic schedule. The idea is to allow some ``buffer'' so that the schedule can tolerate mispredictions as a function of the buffer size. More precisely, for any $p\in (0,1)$, consider the schedule $X^*_{\tau(1-p)}$. 
The following lemma gives an upper bound on the performance of this parameterized, and $H$-oblivious schedule.

\begin{lemma}
For any $p\in (0,1)$, and $r\geq 4$, $X^*_{\tau(1-p)}$ is $r$-robust and has consistency 
$\min\{\frac{c_r}{1-p},r\}$. It also has acceleration ratio at most 
$\min \{\frac{c_r(1+\eta)}{(1-p)}, r\}$ for positive error, at most $\min \{\frac{c_r(1-\eta)}{(1-p)},r\}$ if
$\eta$ is negative error with $\eta \leq p$, and at most $r$, in every other case.  
\label{lemma:bidding.buffer}
\end{lemma}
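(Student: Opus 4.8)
The plan is to treat $X^*_{\tau(1-p)}$ as a uniformly scaled copy of the geometric schedule $(b_r^i)$ and to reduce every claimed bound to a single ``anchor'' estimate involving the one contract that is guaranteed to complete at time $\tau(1-p)$. By construction $X^*_{\tau(1-p)}=(\gamma' b_r^i)$ for the unique $\gamma'$ making some contract, say $x'_m=\gamma' b_r^m=:\ell^*$, complete exactly at time $\tau(1-p)$. Since $b_r$ is a root of $a^2/(a-1)=r$, the schedule $(b_r^i)$ is $r$-robust, and because the robustness formula \eqref{eq:acc.ratio.x} is invariant under scaling all contracts by $\gamma'$, so is $X^*_{\tau(1-p)}$. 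This simultaneously proves $r$-robustness and shows that the acceleration ratio $T/\ell(X,T)$ never exceeds $r$ for any $T$, which accounts for the $\min\{\cdot,r\}$ appearing in every bound and for the ``at most $r$, in every other case'' clause.

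Next I would establish the anchor estimate. Since $x'_m$ completes at time $\tau(1-p)$, a telescoping computation gives
\[
\frac{\tau(1-p)}{\ell^*}=\frac{\sum_{i=1}^m \gamma' b_r^i}{\gamma' b_r^m}=\sum_{k=0}^{m-1} b_r^{-k}\le \frac{b_r}{b_r-1}=c_r,
\]
where the final equality uses $b_r+c_r=r$ and $b_rc_r=r$, exactly as in the consistency computation for $X^*_\tau$. Rearranged, this reads $\tau/\ell^*\le c_r/(1-p)$, and this one inequality drives all remaining bounds.

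The remaining estimates then follow from the monotonicity of $T\mapsto \ell(X,T)$, and in particular from the fact that $\ell(X,T)\ge \ell^*$ whenever $T\ge \tau(1-p)$. For consistency I set $T=\tau$ (so $\eta=0$) and obtain $\tau/\ell(X,\tau)\le \tau/\ell^*\le c_r/(1-p)$. For positive error, $T=\tau(1+\eta)\ge \tau(1-p)$, whence $T/\ell(X,T)\le (1+\eta)\,\tau/\ell^*\le c_r(1+\eta)/(1-p)$. For negative error with $\eta\le p$ we still have $T=\tau(1-\eta)\ge \tau(1-p)$, so $T/\ell(X,T)\le (1-\eta)\,\tau/\ell^*\le c_r(1-\eta)/(1-p)$. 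Taking the minimum with $r$ in each case yields the three stated bounds.

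The one delicate point---and the only place the buffer $p$ is genuinely used---is the negative-error regime, which I expect to be the main obstacle to state cleanly. Every estimate above rests on $\ell(X,T)\ge \ell^*$, which holds exactly when $T\ge \tau(1-p)$, i.e.\ when $\eta\le p$. Once $\eta>p$ the interruption $T=\tau(1-\eta)$ precedes the completion of $\ell^*$, the schedule may genuinely miss this contract, and no guarantee better than the worst-case $r$ survives; this is precisely the non-monotonic phenomenon flagged in the introduction and is why the admissible range of negative error is capped at $p$. I would also fix a convention at the boundary $\eta=p$: since worst-case interruptions are taken infinitesimally before a contract completes, I would treat $T=\tau(1-p)$ with the inequality that keeps $\ell(X,T)\ge\ell^*$ consistent with the $\eta\le p$ branch.
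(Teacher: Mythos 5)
Your proof is correct and follows essentially the same route as the paper's: $r$-robustness by construction yields the $\min\{\cdot,r\}$ caps, the contract completing at time $\tau(1-p)$ has length at least $\tau(1-p)/c_r$, and substituting $T=\tau(1+\eta)$, $T=\tau$, or $T=\tau(1-\eta)$ with $\eta\le p$ gives each bound. The only cosmetic difference is that you re-derive the anchor estimate via the explicit geometric sum for $(\gamma' b_r^i)$, whereas the paper invokes Corollary~\ref{cor:pareto} (where that same computation was already done) as a black box.
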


\begin{proof}
First, note that by construction the schedule is guaranteed to be $r$-robust, so neither the acceleration ratio,
nor the robustness can exceed $r$.
With prediction $\tau$, $X^*_{\tau(1-p)}$ completes a contract of length 
$l=\frac{\tau(1-p)}{c_r}$ by time $T$ (this follows from the statement of the schedule, and 
Corollary~\ref{cor:pareto}). By definition,
the acceleration ratio of the schedule is at most $\frac{T}{l}$; moreover, for positive error 
$\eta$ we have that $T=\tau(1+\eta)$, whereas for negative error  $\eta \leq p$ we have that $T=\tau(1-\eta)$.
For no error, we have that $T=\tau$ (for which the consistency if evaluated).
Combining the above observations yields the lemma.
\end{proof}

The above result provides a tradeoff between the acceleration ratio of $X^*_{\tau(1-p)}$, and the range in which it is sufficiently good, as a function of the error. To illustrate this, consider the case of negative error: If $p$ is relatively small, then the schedule has good acceleration ratio for relatively small $\eta$ ($\eta <p$), which however can (and will) become as large as $r$, for a relatively big range of error, i.e, for $\eta >p$.

We now argue that  these tradeoffs are unavoidable, in any  $r$-robust and $H$-oblivious schedule $X$ with prediction $\tau$. Recall that $\ell(X,\tau)$ denotes the largest contract completed in the schedule by time $\tau$ in $X$, and let 
$p \in [0,1]$ be such that $\tau(1-p)$ is the completion time of this contract. From Corollary~\ref{cor:sizes} we know that $\ell(X,\tau) \leq \tau(1-p)/c_r$, hence for negative error $\eta \leq p$, the acceleration ratio is at least $\frac{c_r(1-\eta)}{1-p}$, and hence the consistency is at least $\frac{c_r}{1-p}$.
Moreover, there exists $x>0$ such that at time $\tau(1+x)$, the largest completed contract does not
exceed $\ell(X,\tau)$. Hence for positive error $\eta<x$, the acceleration ratio is at least $\frac{c_r(1+\eta)}{1-p}$. Last, since the schedule is $H$-oblivious, $T$ can occur at points right before a contract terminates, for all contracts that completed before $\tau$. In this latter case, the acceleration ratio will inevitably be as large as $r$, as $T$ becomes large.

For these reasons we will next consider {\em $H$-aware} schedules in which $\eta \leq H$ and $H$ is known. A natural schedule then is $X^*_{\tau(1-H)}$, in which the buffer $p$ is determined by $H$. Its performance is described in the following lemma, whose proof follows similarly to Lemma~\ref{lemma:bidding.buffer}, by setting $p=H$. 
\begin{lemma}
$X^*_{\tau(1-H)}$ is $r$-robust,  and has acceleration ratio at most 
$\min \{\frac{c_r(1+\eta)}{(1-H)},r\}$ for positive error, and at most $\min \{\frac{c_r(1-\eta)}{(1-H)},r\}$ for negative error.  
\label{lemma:bidding.buffer.bounded}
\end{lemma}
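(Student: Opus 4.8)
The plan is to instantiate Lemma~\ref{lemma:bidding.buffer} at the particular choice $p = H$ and then exploit the one piece of extra structure that the $H$-aware setting provides: the standing hypothesis $\eta \le H$. So the first step is simply to observe that $X^*_{\tau(1-H)}$ is \emph{literally} the parameterized schedule $X^*_{\tau(1-p)}$ of Lemma~\ref{lemma:bidding.buffer} with the buffer set to $p = H$. Since the $r$-robustness established there holds for every $p \in (0,1)$ and is a consequence purely of the construction (all contracts are a single scaling, by $\gamma$, of the $r$-robust geometric schedule $(b_r^i)$), that bound transfers verbatim and gives that $X^*_{\tau(1-H)}$ is $r$-robust. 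This also ensures that neither the acceleration ratio nor the robustness can exceed $r$, which accounts for the outer $\min\{\cdot, r\}$ in both claimed bounds.

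Next I would reuse the key quantity from the proof of Lemma~\ref{lemma:bidding.buffer}: by construction together with Corollary~\ref{cor:pareto}, the schedule completes a contract of length $l = \tau(1-H)/c_r$ at time exactly $\tau(1-H)$. Whenever $T \ge \tau(1-H)$, this contract is completed by time $T$, so $\ell(X^*_{\tau(1-H)}, T) \ge l$ and hence the acceleration ratio $T/\ell(X^*_{\tau(1-H)}, T)$ is at most $T/l$. For positive error, $T = \tau(1+\eta) \ge \tau(1-H)$, so this applies and yields $T/l = c_r(1+\eta)/(1-H)$, giving the first bound after capping by $r$. For negative error, $T = \tau(1-\eta)$, and since $\eta \le H$ by hypothesis we still have $T \ge \tau(1-H)$; the same argument then gives acceleration ratio at most $c_r(1-\eta)/(1-H)$, again capped by $r$, which is the second bound.

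There is no genuine obstacle here in the usual sense; the entire content is the observation that distinguishes this lemma from its $H$-oblivious predecessor. In Lemma~\ref{lemma:bidding.buffer} the dangerous regime was negative error with $\eta > p$, where the contract at time $\tau(1-p)$ fails to complete by $T$ and the acceleration ratio degrades all the way to $r$. With $p = H$ this regime is precisely ruled out by the assumption $\eta \le H$, so the fall-back case never materializes and the clean negative-error inequality holds throughout. Thus the ``main step'' is really just recognizing that aligning the buffer with the known error bound $H$ eliminates the bad case; the two inequalities then follow directly, and as a sanity check both reduce at $\eta = 0$ to the consistency $\min\{c_r/(1-H), r\}$ predicted by Lemma~\ref{lemma:bidding.buffer}.
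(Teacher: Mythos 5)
Your proposal is correct and follows exactly the paper's intended argument: the paper proves this lemma by stating that it ``follows similarly to Lemma~\ref{lemma:bidding.buffer}, by setting $p=H$,'' which is precisely your instantiation, including the key observation that the hypothesis $\eta \leq H$ eliminates the fall-back case $\eta > p$ of the negative-error regime. Your spelled-out verification (the contract of length $\tau(1-H)/c_r$ completing by time $\tau(1-H) \leq T$ in both error signs, with robustness supplying the cap by $r$) is exactly the detail the paper leaves implicit.
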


Since $\eta \leq H$, we have $\acc(X^*_{\tau(1-H)}) \leq \min\{\frac{c_r(1+H)}{1-H} ,r \}$. The next lemma shows that $H$ can take values in a certain range, as function of $c_r$,
for which no other $r$-robust schedule can be better. 

\begin{lemma}
For any $H$ that satisfies the condition $\frac{1+H}{1-H} <\sqrt{\frac{c_r+1}{{c_r}}} -\delta$, 
for any fixed $\delta>0$, 
the acceleration ratio of any $H$-aware $r$-robust schedule is at least 
$\min\{\frac{c_r(1+H)}{1-H} ,r \}$.
\label{lem:h.lower}
\end{lemma}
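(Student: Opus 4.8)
The plan is to fix an arbitrary $H$-aware, $r$-robust schedule $X$ together with its prediction $\tau$, and to exhibit a \emph{single} admissible interruption $T\in[\tau(1-H),\tau(1+H)]$ whose acceleration ratio $T/\ell(X,T)$ is at least $\frac{c_r(1+H)}{1-H}$; since the acceleration ratio is the supremum over all admissible $T$, one such $T$ suffices. Throughout I take $\tau$ large, so that every contract relevant below has index at least the $i_0$ of Corollary~\ref{cor:sizes}; and since idle time only postpones completions (hence only increases the ratios I construct), I may assume $X$ is work-conserving. First I record that under the hypothesis the minimum evaluates to its first term: since $\frac{1+H}{1-H}<\sqrt{\frac{c_r+1}{c_r}}$, and one checks that $\sqrt{\frac{c_r+1}{c_r}}<b_r=r/c_r$, we get $\frac{c_r(1+H)}{1-H}<c_r b_r=r$, so $\min\{\frac{c_r(1+H)}{1-H},r\}=\frac{c_r(1+H)}{1-H}=:V$. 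Now set $\ell^*=\ell(X,\tau(1-H))$, let $t^*\le\tau(1-H)$ be its completion time, and let $x'$ be the next contract, completing at time $t_1>\tau(1-H)$ with $x'>\ell^*$. Two facts from Corollary~\ref{cor:sizes} drive the argument: the upper bound $\ell^*\le\tau(1-H)/c_r$, and the lower bound $t^*\ge c_r\ell^*$ (up to the additive $\epsilon$, absorbed at the end).

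Case~1: $t_1>\tau(1+H)$, i.e.\ no contract completes inside the window. Then $\ell(X,\tau(1+H))=\ell^*$, and interrupting at $T=\tau(1+H)$ yields
\[
\frac{T}{\ell(X,T)}=\frac{\tau(1+H)}{\ell^*}\ \ge\ \frac{\tau(1+H)}{\tau(1-H)/c_r}=\frac{c_r(1+H)}{1-H}=V,
\]
as required.

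Case~2: $t_1\le\tau(1+H)$, so $x'$ completes inside the window. I interrupt just before $t_1$; the largest completed contract is still $\ell^*$, so the ratio tends to $t_1/\ell^*$. Using $t^*\ge c_r\ell^*$ and $x'>\ell^*$,
\[
t_1=t^*+x'\ >\ c_r\ell^*+\ell^*=(c_r+1)\ell^*,
\]
hence $t_1/\ell^*>c_r+1$. The hypothesis forces $V<c_r+1$: from $\frac{1+H}{1-H}<\sqrt{\frac{c_r+1}{c_r}}<\frac{c_r+1}{c_r}$ we obtain $V=c_r\cdot\frac{1+H}{1-H}<c_r+1$, so the ratio exceeds $V$ in this case as well.

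The crux, and the place where \emph{both} parts of Corollary~\ref{cor:sizes} are genuinely needed, is Case~2: a schedule would like to slip a contract into the window so as to keep the right-endpoint ratio small, but the lower bound $t^*\ge c_r\ell^*$ on the completion time of $\ell^*$ forces any such contract to finish no earlier than $(c_r+1)\ell^*$, which is already too late; meanwhile the upper bound $\ell^*\le\tau(1-H)/c_r$ prevents the early ratios from being cheap. I expect the main bookkeeping obstacle to be converting the additive slack $\epsilon$ of Corollary~\ref{cor:sizes} into the multiplicative margin $\delta$ of the hypothesis: taking $\tau$ (hence the index of $\ell^*$) large makes $\epsilon/\ell^*$ negligible, and the strict gap guaranteed by the $-\delta$ term ensures that the strict inequality $V<c_r+1$ survives this limit. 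A final point to state carefully is why examining only the \emph{first} in-window contract $x'$ suffices even when several contracts complete in the window --- but since interrupting just before $x'$ already beats $V$, the remaining contracts need not be considered.
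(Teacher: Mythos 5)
Your proof is correct, and it takes a genuinely different route from the paper's, which argues by contradiction: assuming acceleration ratio at most $\min\{\frac{c_r(1+H)}{1-H},r\}$, the interruption $T=\tau(1-H)$ forces a contract $x$ of length at least $\frac{\tau(1-H)^2}{c_r(1+H)}$ to complete by time $\tau(1-H)$; Corollary~\ref{cor:sizes} then places the completion of $x$ at time at least $\frac{\tau(1-H)^2}{1+H}-\epsilon$, and the hypothesis $\frac{1+H}{1-H}<\sqrt{\frac{c_r+1}{c_r}}-\delta$ is used precisely to show that no contract larger than $x$ can complete by $\tau(1+H)$, so that $T=\tau(1+H)$ yields the contradiction. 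You instead argue directly, splitting on whether any contract finishes inside $(\tau(1-H),\tau(1+H)]$: if none does, $T=\tau(1+H)$ combined with the upper bound $\ell^*\le \tau(1-H)/c_r$ gives the bound; if one does, you interrupt just before its completion time $t_1\ge t^*+x'\ge (c_r+1)\ell^*-\epsilon$ and the ratio already exceeds $c_r+1>V$. The paper's hypothesis is used to rule your Case~2 out entirely (and that structural fact --- the largest completed contract is frozen across the whole window --- is what gets recycled in the proof of Lemma~\ref{lem:dom}), whereas your argument tolerates Case~2 and consequently needs only the weaker condition $\frac{1+H}{1-H}<\frac{c_r+1}{c_r}$; so your proof in fact establishes the lower bound on the wider range of $H$ appearing in Lemma~\ref{lem:dom} (e.g., $H<0.2$ rather than $H<0.101$ at $r=4$), a mild strengthening of the stated lemma. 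Two caveats, both of which you handle consistently with the paper: the step $x'\ge\ell^*$ relies on the standing convention that contract lengths are increasing (the paper's own proof uses the same convention when it asserts that $y$ ``must be at least as big as $x$''), and the additive $\epsilon$ of Corollary~\ref{cor:sizes} must be absorbed by taking $\tau$, hence $\ell^*$, large --- which your closing remarks address. Your explicit check that the hypothesis forces the minimum to equal $c_r\frac{1+H}{1-H}$, via $\sqrt{\frac{c_r+1}{c_r}}<b_r=r/c_r$, is a detail the paper leaves implicit.
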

\begin{proof}
By way of contradiction, let $X$ denote an $H$-aware 
schedule that has acceleration ratio 
at most $\min\{\frac{c_r(1+H)}{1-H} ,r \}$. Then given prediction $\tau$, $X$  must
complete by time $\tau(1-H)$ a contract, say $x$, of length at least 
\[
\frac{\tau(1-H)^2}{c_r(1+H)}.
\]
From Corollary~\ref{cor:sizes}, the completion time of $x$ must be at least 
\[
c_r \cdot \frac{\tau(1-H)^2}{c_r(1+H)}-\epsilon = \frac{\tau(1-H)^2}{1+H}-\epsilon ,
\]
for arbitrarily small $\epsilon>0$, since $T$ can be arbitrarily large.
We now claim that $x$ is also the largest contract completed by time $\tau(1+H)$ in $X$. By way of 
contradiction, suppose that there is a contract $y$ that follows $x$, and which completes by time $\tau(1+H)$.
Note that $y$ must be at least as big as $x$. Then it must be that 
\[
\frac{\tau(1-H)^2}{1+H}-\epsilon + \frac{\tau(1-H)^2}{c_r(1+H)} \leq \tau(1+H), 
\] 
and since $\epsilon$ can be arbitrarily small and smaller than $\delta$, we arrive at a contradiction, concerning the assumption on $H$.   
Thus, if $T=\tau(1+H)$ (i.e., for positive $\eta=H$, the largest contract completed is $x$, and thus the acceleration ratio is at least $c_r (\frac{1+H}{1-H})^2 \geq c_r\frac{1+H}{1-H}$.
\end{proof}

We can also show that there is an even larger range for $H$ than that of Lemma~\ref{lem:h.lower} for which no
other schedule
can {\em dominate} $X^*_{\tau(1-H)}$, in the sense that no schedule can have as good an acceleration ratio as 
$X^*_{\tau(1-H)}$ on all possible values of $\eta \leq H$, and strictly better for at least one such value.  
\begin{lemma}
For any $H$ such that $\frac{1+H}{1-H} <\frac{c_r+1}{c_r}-\delta$, no $r$-robust $H$-aware schedule 
dominates $X^*_{\tau(1-H)}$.
\label{lem:dom}
\end{lemma}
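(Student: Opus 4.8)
The plan is to argue by contradiction: assume some $r$-robust $H$-aware schedule $Y$ dominates $X^*_{\tau(1-H)}$, and show that under the stated bound on $H$ the two schedules must in fact have the \emph{identical} acceleration ratio for every $\eta \in [0,H]$, which rules out any strict improvement. Throughout I would write $\ell = \tau(1-H)/c_r$ for the contract that $X^*_{\tau(1-H)}$ completes exactly at time $\tau(1-H)$, and recall that domination is assessed only over $\eta \in [0,H]$, i.e.\ over interruptions $T \in [\tau(1-H),\tau(1+H)]$.

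First I would pin down $Y$ on the negative-error side. At $\eta=H$ (negative error) we have $T=\tau(1-H)$, where $X^*_{\tau(1-H)}$ achieves acceleration ratio exactly $c_r$. Weak domination forces $Y$'s ratio at this point to be at most $c_r$, i.e.\ $\ell(Y,\tau(1-H))\ge \tau(1-H)/c_r=\ell$; on the other hand Corollary~\ref{cor:sizes} gives $\ell(Y,\tau(1-H))\le \tau(1-H)/c_r=\ell$. Hence $Y$ must complete a contract of length exactly $\ell$ by time $\tau(1-H)$, and by the second part of Corollary~\ref{cor:sizes} this contract, call it $x_k$, completes no earlier than $c_r\ell-\epsilon=\tau(1-H)-\epsilon$ for arbitrarily small $\epsilon>0$ (taking $\tau$, hence $\ell$, large, as the acceleration ratio is a supremum over large $T$).

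The crux is then to show that $Y$'s \emph{next} contract completes only after $\tau(1+H)$. Since contracts are increasing, $x_{k+1}>\ell$, so its completion time exceeds $(\tau(1-H)-\epsilon)+\ell=\tau(1-H)\tfrac{c_r+1}{c_r}-\epsilon$. The hypothesis $\tfrac{1+H}{1-H}<\tfrac{c_r+1}{c_r}-\delta$ is exactly what makes $\tau(1-H)\tfrac{c_r+1}{c_r}$ exceed $\tau(1+H)$ by a fixed positive margin of order $\tau(1-H)\delta$, which dominates $\epsilon$ once $\tau$ is large. Consequently, on the entire window $[\tau(1-H),\tau(1+H)]$ the largest contract $Y$ has completed is precisely $\ell$. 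I expect this comparison of the two completion thresholds, together with the book-keeping of the $\epsilon$ from Corollary~\ref{cor:sizes} against the slack $\delta$, to be the main obstacle; it is the only place where the precise form of the condition on $H$ is used.

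Finally I would observe that the same window statement holds for $X^*_{\tau(1-H)}$ itself: its contract following $\ell$ has length $b_r\ell$ and completes at $\tau(1-H)(1+b_r/c_r)$, which exceeds $\tau(1+H)$ even more comfortably, since $b_r>1$ gives $1+b_r/c_r>\tfrac{c_r+1}{c_r}>\tfrac{1+H}{1-H}$. Thus for every $\eta\in[0,H]$, at $T=\tau(1\pm\eta)$ both schedules have completed the same largest contract $\ell$ and therefore share the identical acceleration ratio $T/\ell$ (which stays below $r$ since $c_r\tfrac{1+H}{1-H}<c_r+1<r$, so no capping intervenes). Hence $Y$ is nowhere strictly better on $\eta\le H$, contradicting the assumption that it dominates $X^*_{\tau(1-H)}$, and the lemma follows.
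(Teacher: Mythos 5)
Your proposal is correct and follows essentially the same route as the paper's own proof: a contradiction argument that uses the negative-error endpoint $T=\tau(1-H)$ together with Corollary~\ref{cor:sizes} to pin down the dominating schedule's contract of length $\tau(1-H)/c_r$ completing near $\tau(1-H)$, then invokes the hypothesis on $H$ (exactly the argument the paper defers to Lemma~\ref{lem:h.lower}) to show no larger contract completes by $\tau(1+H)$, so both schedules have identical acceleration ratios on the whole window, contradicting strict dominance. The only difference is cosmetic: you spell out the $\epsilon$-versus-$\delta$ bookkeeping and the window check for $X^*_{\tau(1-H)}$ itself, which the paper leaves implicit.
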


\begin{proof}
By way of contradiction, suppose that there exists an $r$-robust schedule $X$ that dominates $X^*_{\tau(1-H)}$.
Then $X$ must complete a contract, say $x$, of length at least $\tau(1-H)/c_r$, by time $\tau(1-H)$. Using the same arguments as in the proof of Lemma~\ref{lem:h.lower}, it follows that $X$ does not complete any 
contract bigger than $x$ by time $\tau(1+H)$. This implies that for all possible values of error, $X$ has the 
same acceleration ratio as $X^*_{\tau(1-H)}$, which contradicts the dominance assumption.
\end{proof}

\medskip
\noindent
{\bf Example.}
To put the above results into perspective, let us consider the case $r=4$ (best robustness). 
Then $c_r=2$, and $X^*_\tau$ is 2-consistent, but can have acceleration ratio 4 for any 
$\eta\neq 0$. For given bound $H$, $X^*_{\tau(1-H)}$ has acceleration ratio at most $\min \{\frac{2(1+\eta)}{(1-H)},4\}$ for positive error, and at most $\min \{\frac{2(1-\eta)}{1-H},4\}$ if $\eta$ is negative error. Thus, an absolute upper bound on its  acceleration ratio is $\min\{\frac{2(1+H)}{1-H}, 4\}$, whereas its
consistency is $\min \{\frac{2}{(1-H)},4\}$. For any
$H<0.101$, no 4-robust $H$-aware schedule has better acceleration ratio.
Last, for $H < 0.2$, there is no 4-robust $H$-aware schedule that dominates $X^*_{\tau(1-H)}$.

\section{Binary predictions}
\label{sec:binary}

In this section, we study the setting in which the prediction is in the form of answers to $n$ binary queries
$Q_1, \ldots ,Q_n$ , for some given $n$. Hence, the prediction $P$ is an $n$-bit string, where the $i$-th bit is the answer to $Q_i$. It is worth pointing out that even a single binary query can be quite useful.
For example, it can be of the form ``Is $T \leq B$, for some given bound $B$''?, or ``Is $T\in [a,b]$, for some given $a,b$''? The prediction error $\eta \in [0,1]$ is the fraction of erroneous bits in $P$. We will assume, for simplicity, that the total number of erroneous bits, that is $\eta n$, is an integer. 

Our approach to this problem is as follows. Let ${\cal X}$ be a set of $r$-robust schedules. The prediction $P$ will help choose a good schedule from this class. For positive results, we need to define ${\cal X}$, and show how the prediction can help us choose an efficient schedule from $X$; moreover the prediction must have a practical interpretation, and must tolerate errors. For negative (i.e., impossibility) results, we need to show that any choice of $2^n$ $r$-robust schedules in ${\cal X}$ cannot guarantee consistency below a certain bound. Note that in this scheme, all schedules in ${\cal X}$ must be $r$-robust, because {\em any} schedule in 
${\cal X}$ can be chosen, if the prediction is adversarially generated.

We begin with a negative result, for the simple, but important case $r=4$, i.e., for optimal robustness. The following theorem gives a lower bound on the consistency.

\begin{theorem}
For any binary prediction $P$ of size $n$, any schedule with performance $(4,s)$ 
is such that $s \geq 2^{1+\frac{1}{2^n}}$.
\label{thm:lower}
\end{theorem}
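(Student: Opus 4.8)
The plan is to reduce the statement to a purely geometric covering problem. A binary prediction of size $n$, together with a rule for acting on it, amounts to a family of at most $N:=2^n$ contract schedules $X_1,\dots,X_N$, one per possible answer string; since an adversary controlling the prediction may force any one of them to be executed, robustness $4$ means every $X_k$ has robustness at most $4$. When the prediction is correct, the scheme uses, for each interruption $T$, the schedule dictated by the true query answers at $T$, so its consistency is at least $\sup_T T/F(T)$, where $F(T):=\max_{1\le k\le N}\ell(X_k,T)$ (for each $T$ some fixed $X_k$ is used, and its ratio is at least the best over all $k$). Hence it suffices to prove that for any $N$ schedules, each of robustness at most $4$, one has $\sup_T T/F(T)\ge 2^{1+1/N}$. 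This target is tight: it is matched by $N$ copies of the base-$2$ schedule scaled by the factors $2^{j/N}$, $0\le j<N$, whose completions interleave at multiplicative spacing $2^{1/N}$.

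I would argue the contrapositive in logarithmic time. Suppose the scheme is $s$-consistent with $s<2^{1+1/N}\le 4$, i.e.\ $T/F(T)\le s$ for all large $T$. Writing $\lg=\log_2$, a single contract of length $\ell$ completing at time $C$ certifies ratio at most $s$ exactly on the interruption window $T\in[C,s\ell]$; since Corollary~\ref{cor:sizes} (with $r=4$, so $c_r=2$) gives $C\ge 2\ell$, this window has $\lg$-length at most $\lg(s/2)$. Being $s$-consistent then says precisely that the union, over all contracts of all $N$ schedules, of these windows covers the whole (large) time axis.

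The heart of the matter — and the step I expect to be the main obstacle — is the following density lemma: for a single schedule $X$ of robustness at most $4$, the good set $G(X):=\{T: T/\ell(X,T)\le s\}$ has upper $\lg$-density at most $\lg(s/2)$, i.e.\ its $\lg$-measure in $[1,\Gamma]$ is at most $(\lg(s/2)+o(1))\lg\Gamma$. Granting this, a union bound on measures gives covered density at most $N\lg(s/2)$, and since the union must have density $1$ we obtain $N\lg(s/2)\ge 1$, i.e.\ $s\ge 2^{1+1/N}$, contradicting the assumption. The obstacle is that the lemma is genuinely amortized and fails contract-by-contract: with $X=(x_i)$ and $C_i=\sum_{j\le i}x_j$, the good set is $\bigcup_i[C_i,sx_i]$, its $\lg$-measure telescopes as $\sum_i(\lg s-\lg(C_i/x_i))_+$, the ambient span as $\sum_i\lg(C_{i+1}/C_i)$, and one needs $\sum_i(\lg s-\lg(C_i/x_i))_+\le \lg(s/2)\sum_i\lg(C_{i+1}/C_i)$. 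Per step, $4$-robustness alone forces only $C_{i+1}/C_i\ge 4/3$ (and $C_i\ge 2C_{i-2}$ over two steps), whereas the base-$2$ optimum has $C_{i+1}/C_i\to 2$, so two useful contracts can fall inside one octave and no term-wise bound holds. The missing growth must be extracted globally from the robustness recursion $\sum_{j\le i}x_j\le 4x_{i-1}$, whose equality solution grows like $2^i$, together with the observation that when completions cluster ($C_{i+1}/C_i$ near $1$) the ratio $C_i/x_i$ is correspondingly large, so those contracts contribute almost no coverage. I would make this precise by a charging argument that debits the coverage of each good contract against the robustness slack consumed by the intervening non-useful contracts, using Corollary~\ref{cor:sizes} to keep the constant exactly $\lg(s/2)$, and then checking that equality is attained by the shifted base-$2$ family — which simultaneously confirms the constant and the optimality of the bound.
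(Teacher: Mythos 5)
Your reduction to a family of $N=2^n$ schedules, each necessarily $4$-robust, with consistency at least $\sup_T T/F(T)$, is exactly the paper's information-theoretic first step. From there you genuinely diverge: the paper runs an explicit adversarial induction --- it orders $X_1,\dots,X_{2^n}$ according to the largest contract completed by a sequence of carefully chosen interruptions and proves (Property~\ref{prop:lb}) that the relevant contract lengths must decay geometrically, $y_l\ge 2^{1-\frac{l-1}{2^n}}\alpha$, so that at the last interruption no schedule in the family achieves ratio below $2^{1+\frac{1}{2^n}}$ --- whereas you recast the core as a covering statement in $\log_2$-time: each contract certifies ratio at most $s$ on a window of $\log_2$-length at most $\log_2(s/2)+o(1)$ by Corollary~\ref{cor:sizes}, upper log-densities are subadditive over the $N$ good sets, and density $1$ of the covered axis forces $N\log_2(s/2)\ge 1$. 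Both arguments rest on the same asymptotic fact (the paper's Lemma~\ref{lemma:helper} is your $C\ge(2-\epsilon)\ell$), but your framing is more modular and makes the union bound explicit, where the paper makes the adversary explicit.

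The gap is the one you flag yourself: as submitted, the density lemma is a plan, not a proof. It is, however, true, and it closes more easily than your discussion suggests --- a term-wise bound \emph{does} hold once each window is charged to the gap \emph{preceding} it, rather than to its own octave. Write $C_i=\sum_{j\le i}x_j$, $u_i=C_i/x_i$ and $t_i=x_{i+1}/C_i$, so that $u_{i+1}=(1+t_i)/t_i$, the window of contract $i+1$ has log-length $\bigl(\log_2\frac{st_i}{1+t_i}\bigr)_+$ and the gap $[C_i,C_{i+1}]$ has log-length $\log_2(1+t_i)$. The constraint your per-step analysis was missing is an \emph{upper} bound on the gap: $4$-robustness gives $C_{i+1}\le 4x_i$, and Corollary~\ref{cor:sizes} gives $x_i\le C_i/(2-\epsilon)$, so $t_i\le 1+O(\epsilon)$ --- consecutive completions are asymptotically at most an octave apart (your $C_{i+1}/C_i\ge 4/3$ points the wrong way and is not needed). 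Now for $t\in(0,1]$ and $2<s\le 4$, the function $h(t)=\log_2(s/2)\log_2(1+t)-\log_2\frac{st}{1+t}$ satisfies $h(1)=0$ and $h'(t)\le 0$ on $(0,1]$, since $h'(t)\propto \frac{\log_2(s/2)+1}{1+t}-\frac{1}{t}$ and $t\log_2(s/2)\le 1$ there; hence $h\ge 0$, i.e., each window is at most a $\log_2(s/2)$ fraction of its preceding gap, exactly capturing your observation that clustered completions ($t_i$ small) force $u_{i+1}=1+1/t_i$ large and hence near-empty windows. Summing over $i$ (the finitely many early contracts where the $\epsilon$-bound does not yet apply contribute $O(1)$ windows of length at most $\log_2 s$, hence zero density, and the $O(\epsilon)$ loss is absorbed since $\delta>0$ is fixed, as in the paper) gives upper density at most $\log_2(s/2)+O(\epsilon)$, completing your lemma and with it the theorem.
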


\begin{proof}
We first give an outline of the proof, which is based on an information-theoretic argument. With $n$ binary queries, the prediction $P$ can only help us choose a schedule from a class ${\cal X}$ of at most $2^n$ $4$-robust schedules. 
Let $X_1, X_2, \ldots ,X_{2^n}$ describe these schedules. By way of contradiction, suppose we could guarantee consistency $S=2^{1+\frac{1}{2^n}} -\delta$, with $\delta>0$. We show that there exists an ordering
of these schedules with the following property, which we prove by induction: there is a set of $2^n-1$ interruptions, $T_2, \ldots ,T_{2^n-1}$ such that, for interruption $T_i$, with $i\in [2,2^n-1]$, no schedule of rank at most $i+1$ in the ordering can guarantee consistency $S$. This means that for interruption  $T_{2^n-1}$, no schedule in ${\cal X}$ can guarantee robustness $S$, a contradiction. 

We now proceed with the technical details of the proof. 
Let $S=2^{1+\frac{1}{2^n}}$. By way of contradiction, suppose there is a schedule $Z$, which is 4-robust, and
with $n$-bit, error-free prediction $P$ which has robustness $S-\delta$, for some $\delta>0$. We will rely to the following  information-theoretic argument: with $n$-bit prediction, $Z$ can only differentiate between a set 
${\cal X}$ of $2^n$ schedules. Each schedule in ${\cal X}$ must be 4-robust, otherwise $Z$ cannot be 4-robust either. Let $X_1, \ldots X_{2^n}$ denote the $2^n$ schedules in ${\cal X}$, and we denote by
$x_{i,l}$ the length of the $i$-th contract in $X_l$. We also define $T_{i,l}= \sum_{j=1}^i x_{i,l}$ as the completion time of the $i$-th contract in $X_l$. We will say that for a given interruption $T$, $Z$ {\em chooses} 
schedule $X_l$ in ${\cal X}$ if the prediction $P$ points to this schedule. Without loss of generality, we can
assume that $Z$ will always choose a schedule that has completed the {\em largest} contract completed by time 
$T$, among all schedules in ${\cal X}$.

Let us fix some index $i \in {\mathbb N}^+$. There exists a schedule in $\{X_2, \ldots ,X_{2^n}\}$ that has completed the {\em largest} contract by time $T_{i,1}$ among all these schedules. Without loss of generality, we can assume that this schedule is $X_2$ (by re-indexing the schedules), and we denote by $\bar{i}_2$ the index of this largest contract in $X_2$.
Inductively, for all $l \in [2,2^n-1]$, there has to be a schedule in $\{X_{l+1}, \ldots X_{2^n}\}$ which has completed the largest contract by time $T_{\bar{i}_l,l}$ among all these schedules. Again, without loss of generality, we can assume that this schedule is $X_{l+1}$, and we denote by $\bar{i}_{l+1}$ the index of this largest contract in $X_{l+1}$. We will say that ${\cal X}$ is {\em ordered for index $i$} if its schedules obey the above properties. 

We will use the following helpful property concerning 4-robust schedules. This is proven in~\cite{DBLP:conf/innovations/0001DJKR20} in the context of online bidding, but also applies to contract scheduling, from the discussion in the Preliminaries.
\begin{lemma}
For every 4-robust schedule $X=(x_i)$, and every $\epsilon>0$ there exists $i_0$ such that 
$\sum_{j=1}^i x_j \geq (2-\epsilon) x_i$, for all 
$i\geq i_0$ 
\label{lemma:helper}
\end{lemma}

For every $\epsilon>0$, we can then choose $i_0$ sufficiently large such that the conditions of 
Lemma~\ref{lemma:helper} apply to $X_1$, but also to all schedules in ${\cal X}$, since otherwise some schedules in ${\cal X}$ would not have finite robustness. Consider then any fixed $i\geq i_0$, and let ${\cal X}$ be ordered for $i$. 
To simplify a bit the notation, let $y_l$ be equal to $x_{\bar{i}_l,l}$.
Then from Lemma~\ref{lemma:helper} we have that 
\begin{equation}
T_{\bar{i}_l,l} \geq (2-\epsilon) y_l, \ \textrm{for all $l \in [1,2^n]$, and $i \geq i_0$.} 
\label{lemma:2.helper}
\end{equation}

Note that $\epsilon$ can be chosen to be arbitrarily small, for sufficiently large $i_0$. To simplify the proofs, in what follows we will assume that Lemma~\ref{lemma:helper} holds with $\epsilon=0$. We can make this assumption without affecting correctness, because $\delta>0$ is fixed, and $\epsilon$ can be chosen arbitrarily small than 
$\delta$.  

We will prove the following property, which will yield the proof of the theorem. This is because for $l=2^n$, the property guarantees that $Z$ cannot choose any schedule in ${\cal X}$ so as to guarantee consistency strictly less than $S$, a contradiction. We will again simplify a little the notation. We will define $z_l$
to be equal to $x_{\bar{i}_l-1,l}$, i.e., the contract immediately preceding $y_l$ in $X_l$. We will 
also define $\alpha$ to be equal to $x_{i-1,1}$. This contract plays a special role in the proof, as we will see.

\begin{property}
Suppose that $Z$ has consistency at most $S-\delta$, and that is ordered for $i$. Then there exists $i \geq i_0$ such that for every $l \in [2,2^n]$, it must be that 
$y_l \geq 2^{1-\frac{l-1}{2^n}} \alpha$. Moreover, for every $l \in [2,2^n]$, if an interruption occurs at a time infinitesimally earlier than
$T_{\bar{i}_l,l}$, then $Z$ cannot choose any schedule in $\{X_1, \ldots X_l\}$.
\label{prop:lb}
\end{property}

\begin{proof}[Proof of Property~\ref{prop:lb}]
By induction on $l$. We begin with the base case, namely $l=2$. 
 Let $s=S-\delta$ denote the robustness of $Z$.

First, we will prove the lower bound on $y_2$. Recall that $X_1$ is 4-robust; furthermore, we know that no schedule can be better than 4-robust. This implies that
\[
\sup_{i \geq i_0} \frac{T_{i,1}}{x_{i-1,1}} = \sup_{i \geq i_0} \frac{T_{i,1}}{\alpha} \geq 4.
\]
By way of contradiction, suppose that $y_2<2^{1-\frac{1}{2^n}} \alpha$. Consider an interruption infinitesimally earlier than $T_{i,1}$. There are two possibilities: either $Z$ chooses $X_1$ or it chooses
$X_2$ (since ${\cal X}$ is ordered for $i$, this is the best schedule among $X_2, \ldots X_{2^n}$). In the former
case, we have that 
\[
s \geq \frac{T_{i,1}}{\alpha},
\]
whereas in the latter case we have that 
\[
s \geq \frac{T_{i,1}}{y_2} \geq \frac{T_{i,1}}{2^{1-\frac{1}{2^n}} \alpha}.
\]
These two possible cases apply for all $i \geq i_0$. Hence we obtain that 
\[
s \geq \sup_{i \geq i_0} \frac{T_{i,1}}{2^{1-\frac{1}{2^n}} \alpha} \geq \frac{4}{2^{1-\frac{1}{2^n}}}=2^{1+\frac{1}{2^n}}=S,
\]
a contradiction. 

Next, we will show that there exists an interruption such that $Z$ cannot choose either $X_1$ or $X_2$. Recall that by definition, $T_{\bar{i}_2,2}$ is the completion time of 
$y_2$, hence from Lemma~\ref{lemma:helper} we have that 
\[
T_{\bar{i}_2,2} \geq 2 y_2 \geq 2 2^{1-\frac{1}{2^n}} \alpha = 2^{2-\frac{1}{2^n}} \alpha.
\]
Consider an interruption infinitesimally earlier than $T_{\bar{i}_2,2}$. Suppose, by way of contradiction, that 
$Z$ chooses $X_1$. Then it must be that
\[
\frac{T_{\bar{i}_2,2}}{x_{i-1,1}} \leq s <S \Rightarrow S > 2^{2-\frac{1}{2^n}}, 
\]
a contradiction.

Next, suppose, again by way of contradiction, that $Z$ chooses $X_2$. Note that by definition, the largest contract completed by the above interruption in $X_2$ is $z_2$, and since $z_2$ cannot exceed 
$y_2$, from Lemma~\ref{lemma:helper} we obtain that 
\[
z_2 \leq \frac{ T_{\bar{i}_2,2}-y_2}{2}.
\]

Therefore, using again Lemma~\ref{lemma:helper} we infer that
\begin{align}
\frac{T_{\bar{i}_2,2}}{z_2} < S &\Rightarrow S > 2 \frac{T_{\bar{i}_2,2}}{T_{\bar{i}_2,2}-y_2 } \nonumber \\
&\geq 
4 \frac{y_2}{T_{\bar{i}_2,2}-y_2 } \geq 8 \frac{y_2}{T_{\bar{i}_2,2}} 
\label{eq:base.1}
\end{align}

However, we know that since $X_1$ is 4-robust, then $T_{i,1} \leq 4 \alpha$. Since $Z$ is ordered for $i$,
we know that $T_{\bar{i}_2,2} \leq T_{i,1}$, thus  $T_{\bar{i}_2,2} \leq 4 \alpha$. Combining with the above
inequality we obtain that 
\begin{equation}
S > 2 \cdot y_2 \geq 2 \cdot 2^{1-\frac{1}{2^n}}= 2^{2-\frac{1}{2^n}}, 
\label{eq:base.2}
\end{equation}
a contradiction. This concludes the base case.

For the induction hypothesis, suppose that the property holds for $l-1$. We will next show that it holds for $l$.

First, we will prove the lower bound on $y_l$. Consider an interruption infinitesimally earlier than 
$T_{\bar{i}_{l-1},l-1}$. From the induction hypothesis, we know that for this interruption, $Z$ must choose a schedule in $\{X_l, \ldots X_{l-1}\}$. The largest contract finished by that time is $y_l$. Thus it must be 
that
\begin{align*}
\frac{T_{\bar{i}_{l-1},l-1}}{y_l} &\leq S \Rightarrow y_l \geq \frac{T_{\bar{i}_{l-1},l-1}}{S} \nonumber \\
& \geq \frac{2y_{l-1}}{y_l} \tag{From Lemma~\ref{lemma:helper}} \\
&\geq 2\frac{2^{1-\frac{l-2}{2^n}} \alpha}{2^{1+\frac{1}{2^n}}}  \tag{From the induction hypothesis} \\
&=2^{1-\frac{l-1}{2^n}} \alpha.
\end{align*}

Consider now an interruption infinitesimally earlier than $T_{\bar{i}_{l},l}$. Suppose first, by way of contradiction, that $Z$ chooses $X_1$. Then it must be that
\begin{align*}
\frac{T_{\bar{i}_{l},l}}{x_{i-1},1} &< S \Rightarrow S \geq \frac{2y_l}{\alpha} \tag{From Lemma~\ref{lemma:helper}} \\
&\geq 2 \frac{2^{1-\frac{l-1}{2^n}} \alpha}{\alpha} \tag{Since $y_l \geq 2^{1-\frac{l-1}{2^n}} \alpha$} \\
&\geq 2 2^{1-\frac{2^n-1}{2^n}} \tag{Since $l \leq 2^n$} \\
= 2^{1+\frac{1}{2^n}},
\end{align*}
a contradiction. 

Suppose then, again by way of contradiction, that $Z$ chooses one of $X_2, \ldots X_{l-1}$, say $X_m$. We will arrive to a contradiction by applying an argument similar to the one we used for the base case. Namely, it must be that 
\[
\frac{T_{\bar{i}_{l},l}}{z_m} \leq S, \ \textrm{and } \ z_m \leq \frac{T_{\bar{i}_{l},l}-y_m}{2},
\]
from which we obtain that
\[
S \geq 2\frac{T_{\bar{i}_{l},l}}{T_{\bar{i}_{l},l}-y_m},
\]
and using the same argument as in~\eqref{eq:base.1} and~\eqref{eq:base.2} it follows that 
$S\geq 2^{2-\frac{1}{2^n}}$, a contradiction. 

We conclude that for the above defined interruption, $Z$ cannot choose a schedule in $\{X_1, \ldots X_l\}$.
This completes the inductive step, and the proof of the property.
\end{proof}
As explained earlier the above property suffices to prove the result.
\end{proof}

We complement Theorem~\ref{thm:lower} with the following positive result. Consider the set ${\cal X}=\{X_i, i \in [0,2^n-1]\}$ of schedules, in which $X_i=(x_{j,i})_{j\geq 1}$ is defined by $x_{j,i}=d^{j+\frac{i}{2^n}}$, for $d>1$ that we will choose later. In words, $X_i$ is a near-exponential schedule with base $d$,
and a scaling factor equal to $d^\frac{i}{2^n}$. The prediction $P$ then chooses an index, in $[0,2^n-1]$, of the schedule in this ${\cal X}$. We call {\sc Ideal} the schedule obtained from ${\cal X}$ with prediction $P$.

\begin{theorem} 
For every $r\geq 4$, define
$d=b_r$, if $r \leq \frac{(1+2^n)^2}{2^n}$, and $d=1+2^n$, otherwise.
Then {\sc Ideal} has performance $(r,d^{1+\frac{1}{2^n}}/(d-1))$.
\label{thm:k.upper}
\end{theorem}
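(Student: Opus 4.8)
The plan is to verify the two components of the claimed performance separately: that {\sc Ideal} is $r$-robust, and that its consistency equals $d^{1+\frac{1}{2^n}}/(d-1)$; the stated choice of $d$ will then fall out of a short optimization.

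\textbf{Robustness.} First I would observe that each $X_i=(d^{j+\frac{i}{2^n}})_{j\ge 1}$ is simply the exponential schedule $(d^{j})_{j\ge 1}$ multiplied by the constant $d^{\frac{i}{2^n}}$. Since the acceleration ratio in~\eqref{eq:acc.ratio.x} is scale-invariant, every $X_i$ has the same robustness as the base-$d$ exponential schedule, namely $d^2/(d-1)$. With $d=b_r$ this equals $r$, because $b_r$ is a root of $a^2/(a-1)=r$; with $d=1+2^n$ it equals $(1+2^n)^2/2^n$, which under the case hypothesis $r>(1+2^n)^2/2^n$ is strictly below $r$. Since {\sc Ideal} executes exactly one member of ${\cal X}$ no matter what the (possibly adversarial) prediction says, its robustness is $\max_i r_{X_i}\le r$.

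\textbf{Consistency.} For the error-free case I would first describe the merged picture of all contracts. As $j$ ranges over the positive integers and $i$ over $[0,2^n-1]$, the exponents $j+\frac{i}{2^n}$ range over exactly the rationals $\frac{m}{2^n}$ with integer $m\ge 2^n$; hence the union over ${\cal X}$ of all contract lengths is the fine geometric progression $\{d^{m/2^n}:m\ge 2^n\}$ of ratio $d^{1/2^n}$. The contract of length $d^{m/2^n}$ belongs to $X_i$ with $i=m\bmod 2^n$ and $j=\lfloor m/2^n\rfloor$, and completes at time $T_{j,i}=d^{i/2^n}\,\frac{d(d^{j}-1)}{d-1}=\frac{d^{1+m/2^n}}{d-1}-(\text{lower order})$. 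When $\eta=0$ the prediction points to the schedule whose largest completed contract by time $T$ is maximal over ${\cal X}$; so for $T$ lying just below the completion time of the length-$d^{(m+1)/2^n}$ contract, the largest completed contract has length $d^{m/2^n}$, and the ratio $T/d^{m/2^n}$ tends to $\frac{d^{1+1/2^n}}{d-1}$ from below as $m$ grows. Verifying that the completion times are sorted in the same order as the contract lengths, and taking the supremum over all $T$, then yields consistency exactly $d^{1+\frac{1}{2^n}}/(d-1)$.

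\textbf{Choice of $d$ and the main obstacle.} To close the loop I would note that $d=1+2^n$ is the unconstrained minimizer of $s(d)=d^{1+\frac{1}{2^n}}/(d-1)$: setting $(\ln s)'=\frac{1+1/2^n}{d}-\frac{1}{d-1}=0$ gives $d=1+2^n$, and $s$ is decreasing before and increasing after this point. Meanwhile $r$-robustness forces $d\in[c_r,b_r]$. Because $b_r=1+2^n$ exactly at $r=(1+2^n)^2/2^n$ and $b_r$ is increasing in $r$, the constrained minimizer is the right endpoint $b_r$ when $r\le (1+2^n)^2/2^n$ and the interior point $1+2^n$ otherwise---precisely the stated choices, and substituting them gives the claimed consistency. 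I expect the delicate step to be the consistency bookkeeping: one must confirm that the interleaved completion times of the $2^n$ schedules are ordered consistently with their lengths, and that the lower-order corrections in $T_{j,i}$ only \emph{decrease} each local acceleration ratio, so that the clean value $d^{1+\frac{1}{2^n}}/(d-1)$ is genuinely the supremum (approached in the limit of large contracts) rather than being exceeded for some small interruption.
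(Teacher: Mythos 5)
Your proposal is correct and follows essentially the same route as the paper: worst-case interruptions just before contract completions, the observation that the $2^n$ scaled exponential schedules interleave into a geometric progression of ratio $d^{1/2^n}$ so the largest completed contract is the immediately preceding one (your merged-progression view is just a unified packaging of the paper's two cases $l\neq 0$ and $l=0$), robustness via $d^2/(d-1)\leq r$ for each scaled schedule, and the same calculus optimization of $d$ over $[c_r,b_r]$ with unconstrained minimizer $1+2^n$. The ordering subtlety you flag does hold (the completion time of the length-$d^{m/2^n}$ contract is $\frac{d^{1+m/2^n}-d^{1+(m\bmod 2^n)/2^n}}{d-1}$, which one checks is increasing in $m$), and the paper in fact asserts this step without proof, so you are if anything slightly more careful than the original.
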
 

\begin{proof}
The worst-case interruptions occur infinitesimally earlier than the completion time of a contract in ${\cal X}$.
Specifically, let $T=\sum_{i=1}^j x_{{i,l}}-\epsilon$ be an interruption right before the $j$-th contract in 
$X_l$ completes. We have 
\[
\sum_{i=1}^j x_{{i,l}} =\sum_{i=1}^j d^{i+\frac{l}{2^n}} \leq d^\frac{l}{2^n} \frac{d^{j+1}}{d-1}.
\]
We consider two cases. If $l\neq 0$, then the largest contract completed by time $T$ in {\sc Ideal} is 
$x_{j,l-1}$ of length $d^{j+\frac{l-1}{2^n}}$. The prediction $P$ chooses this schedule, and the consistency is at most $T/x_{j,l-1} \leq \frac{d^{1+\frac{1}{2^n}}}{d-1}$. If $l=0$, then the largest such contract is contract $x_{j-1,2^n-1}$ of schedule $X_{2^n-1}$ which has length $d^{j-1+\frac{2^n-1}{2^n}}=d^{j-\frac{1}{2^n}}$. Again, the prediction chooses this schedule, and the consistency is at most $T/x_{j-1,2^n-1} \leq 
\frac{d^{1+\frac{1}{2^n}}}{d-1}$.

Moreover, we require that each schedule in ${\cal X}$ is $r$-robust, or equivalently, that 
$\frac{d^2}{d-1} \leq r$, as follows from the robustness of exponential schedules (see Preliminaries).

Thus we best value of $d$ is such that 
\[
\frac{d^2}{d-1} \leq r \ \textrm{ and } \frac{d^{1+\frac{1}{2^n}}}{d-1} \ \textrm{is minimized}.
\]
Using standard calculus, it follows that the optimal choice of $d$ is as in the statement of the theorem. 
\end{proof}

For example, for $r=4$, {\sc Ideal} has performance $(2^{1+\frac{1}{2^n}},4)$, which matches Theorem~\ref{thm:lower}, and is, therefore, Pareto-optimal.

{\sc Ideal}, as its name suggests, is not a practical schedule: a single error in one of the queries can make its acceleration ratio as bad as its robustness. Intuitively, this occurs because the $n$ queries implement a type of ``binary search'' in the space of all $2^n$
schedules in ${\cal X}$, and which is not robust to errors.
We will instead propose a family of schedules, which we call \robust, where $p \in [0,1]$ is a parameter that defines the range of error that the schedule can tolerate (this will become more clear shortly). More precisely, we will define a class of schedules 
${\cal X}$, and the prediction $P$ will be the index of one of these schedules. However, this time there are only $n$ schedules in ${\cal X}$ instead of $2^n$, as in the case of {\sc Ideal}. Each  $X_i \in {\cal X}$ is defined as $X_i=(x_{j,i})_{j \geq 0}=d^{j+\frac{i}{n}}$, with $i \in [0,n-1]$, and again $d>1$ to be determined later.

We now describe the $n$ queries that comprise the prediction $P$. Each query $Q_i$, for $i \in [0,n-1]$ is of
the form ``Is the best schedule, for the given interruption in $\{X_0, \ldots ,X_i\}$?''. Note that the queries obey a monotonicity property: if $Q_{i}$ is ``no'', and $Q_{i+1}$ is ``yes'', we know an error has occurred in one of these queries. It is also important to note that each of the queries $Q_i$ has an equivalent statement of the following form: ``Does the interruption $T$ belong to a subset $S_i$ of the timeline?''. Thus each query asks whether $T$ falls in a certain partition of the timeline, which has a more natural, and practical interpretation.

If there were no errors (i.e., for $\eta=0$), then the best schedule in ${\cal X}$ would be the number of 
``no'' responses to the $n$ queries, minus one to account for indexing from 0. However, in the presence of errors, one needs to be careful, because, once again, a single error can have an enormous impact. For this reason,
\robust uses the parameter $p$.  In particular, it chooses schedule $X_m$, where 
$m$ is defined as $(N-1-pn) \bmod n$  and $N$ is the number of ``no'' responses (again, for convenience we will assume that $pn$ is integral). In words, \robust chooses a schedule of index ``close and above'', in the cyclic order of indices, to an index that would correspond to an error-free prediction. The following theorem bounds the 
performance of \robust, and shows how to choose the base $d$. We make two assumptions: that $\eta \leq p$ (thus \robust can only tolerate up to $p$ fraction of query errors), and that $p \leq 1/2$ (otherwise, in the worst case, the queries are too ``corrupt'' to be of any use).

\begin{theorem}
For every $r\geq 4$, define $K$ to be equal to $\frac{2pn+1}{n}$, and $d$ to be equal to  
$b_r$, if $r\leq (1+K)^2/K$, and $1+K$, otherwise. Then  
\robust is $r$-robust and has acceleration ratio at most  $\frac{d^{1+\frac{1}{n}+2p}}{d-1}$, assuming 
$\eta \leq p \leq 1/2$.
\label{thm:oof}
\end{theorem}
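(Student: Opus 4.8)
The plan is to prove the two assertions separately: the robustness, which follows immediately from the structure of $\mathcal{X}$, and the acceleration-ratio bound, where essentially all the work lies. For robustness I would observe that each $X_i=(d^{j+i/n})_{j\ge 0}$ is an exponential schedule of base $d$ scaled by the constant $d^{i/n}$; since multiplying every contract by a common factor leaves the ratio in~\eqref{eq:acc.ratio.x} unchanged, each $X_i$ has robustness exactly $d^2/(d-1)$, the value quoted in the Preliminaries. Because an adversarial prediction can force \robust onto any member of $\mathcal{X}$, its robustness is the maximum of these, namely $d^2/(d-1)$, so it suffices to check $d^2/(d-1)\le r$ on both branches of the definition of $d$: for $d=b_r$ this holds with equality since $b_r$ is a root of $d^2-rd+r=0$, and for the other branch the inequality is exactly the defining threshold.

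For the acceleration ratio I would first set up a global indexing of the contracts. The lengths occurring across all schedules are exactly $\{d^{g/n}:g\ge 0\}$, and the length-$d^{g/n}$ contract belongs to $X_{g\bmod n}$ at position $\lfloor g/n\rfloor$; write $C(g)$ for its completion time. A short computation gives $d^{g/n}\le C(g)<\tfrac{d}{d-1}\,d^{g/n}$, and I would verify that $C$ is strictly increasing in $g$ (the only nontrivial check being at the block boundaries $g\equiv n-1\pmod n$). This monotonicity is the crucial structural fact: it implies that for any interruption $T$ the globally largest contract completed by time $T$ has a well-defined global index $g^*=\max\{g:C(g)\le T\}$, so that $C(g^*)\le T<C(g^*+1)$. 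The best schedule is then $X_{m^*}$ with $m^*=g^*\bmod n$ and largest completed contract $\ell^*=d^{g^*/n}$, and the upper bound $C(g^*+1)<\tfrac{d}{d-1}\,d^{(g^*+1)/n}$ yields $T/\ell^*<d^{1+1/n}/(d-1)$, the error-free bound of the same shape as in Theorem~\ref{thm:k.upper}.

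Next I would quantify the loss due to errors. Since flipping one query answer changes the number $N$ of ``no'' responses by one, $|N-N_0|\le \eta n\le pn$, where $N_0$ is the error-free count encoding $m^*$. Plugging this into the rule $m=(N-1-pn)\bmod n$ shows that the chosen index $m'$ satisfies $m'\equiv m^*-s\pmod n$ for some $s\in[0,2pn]$; subtracting $pn$ is precisely what prevents the adversary from pushing $m'$ cyclically \emph{above} $m^*$, which would be disastrous because of wrap-around. Using monotonicity of $C$ once more, the largest contract of $X_{m'}$ completed by time $T$ has global index $g'=g^*-s$, whence $\ell'=d^{g'/n}\ge \ell^*\,d^{-2p}$ (for $T$ large enough that this contract exists; small $T$ is covered by $r$-robustness). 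Combining, the acceleration ratio is $T/\ell'=(T/\ell^*)(\ell^*/\ell')<\tfrac{d^{1+1/n}}{d-1}\cdot d^{2p}=\tfrac{d^{1+1/n+2p}}{d-1}$, as claimed.

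Finally, the choice of $d$ comes from minimizing $d^{1+1/n+2p}/(d-1)$ subject to $d^2/(d-1)\le r$, i.e. $d\in[c_r,b_r]$. By standard calculus the unconstrained minimizer of $d^{a}/(d-1)$ with $a=1+1/n+2p$ is $d=a/(a-1)$; comparing this point with the feasible interval, and noting that $(1+K)^2/K$ is exactly the value of $r$ at which the minimizer coincides with an endpoint, gives the two branches in the statement. I expect the real obstacle to be the third paragraph: correctly tracking the cyclic ``$\bmod n$'' bookkeeping to establish the one-sided window $m'\in[m^*-2pn,m^*]$ (equivalently, that \robust never overshoots the true best index), and then converting that index window into the clean multiplicative bound $\ell^*/\ell'\le d^{2p}$ via the monotonicity of $C$. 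The robustness claim and the calculus optimization are routine by comparison.
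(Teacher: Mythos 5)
Your proposal is correct and follows essentially the same route as the paper's proof: robustness via the exponential-schedule fact $d^2/(d-1)\le r$, the key observation that the $-pn$ shift confines the chosen index to a one-sided cyclic window of width $2pn$ below the best index (yielding the $d^{2p}$ loss factor), and the same calculus optimization over $d\in[c_r,b_r]$ producing the two branches. Your global-index $g$ and monotone completion-time function $C(g)$ are just a cleaner bookkeeping device for what the paper handles by the two explicit cases $m\le l$ and $m>l$ (wrap-around), so the substance of the argument is identical.
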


\begin{proof}
For interruption $T$, let $l$ denote the index of the best schedule in ${\cal X}$. From the structure 
of ${\cal X}$, this means that, in worst-case,
$T$ occurs right before the completion of a contract, say $j$, in the schedule $X_{(l+1) \bmod n}$. We will consider the case $l \neq n-1$, thus $(l+1) \bmod n =l+1$; the outlier case $l=n-1$ follows similarly, but with a slightly different argument (namely, the worst case interruption occurs right before the completion time of contract $j+1$ of $X_0$). We 
express this interruption as 
\[
T=\sum_{i=1}^j x_{i,l+1} = \sum_{i=1}^j d^{i+\frac{l+1}{n}} \leq \frac{d^{j+1+\frac{l+1}{n}}}{d-1}.
\]
Let $m$ denote the index chosen by \robust, as defined earlier. The crucial observation is that in a cyclic
ordering of the indices, $m$ and $l$ are within a distance at most  $(\eta+p)n$. Here, a distance of at most $\eta n$ is due to the maximum number of erroneous queries, and an additional distance of at most $pn$ is further incurred by the algorithm.  Since $\eta \leq p$, 
they are within a distance at most $2pn$.  

We will give a lower bound on the largest contract length, say $L$ completed by time $T$ in \robust. We consider two cases. First, suppose that $m\leq l$, then by the structure of ${\cal X}$, $L$ is at least the length 
$x_{j,l-2pn}=d^{j+\frac{l-2pn}{n}}$. Next, suppose that $m>l$. In this case, $L$ is at least the length 
of $x_{j-1, n+l-2pn}=d^{j-1+\frac{n+l-2p n}{n}}=d^{j+\frac{l-2p}{n}}$. In both cases we conclude that $L\geq d^{j+\frac{l-2p n}{n}}$. Therefore the acceleration ratio is at most
$T/L \leq \frac{d^{1+\frac{1}{n}+2p}}{d-1}$. We now want to find $d$ such that $d^2/(d-1)\leq r$ and 
$\frac{d^{1+\frac{1}{n}+2p}}{d-1}$ is minimized. Using standard calculus, it follows that the best choice of $d$ 
is as in the statement of the theorem.
\end{proof}

For example if $r=4$, then for any given $p \leq 1/2$, \robust is 4-robust, can tolerate at most a 
$p\leq 1/2$ fraction of erroneous responses, and has acceleration ratio at most $2^{1+\frac{1}{n}+2p}$.
We can interpret the result of the theorem in two ways. First, one can use $p$ as a hedging parameter: with larger $p$, better tolerance to errors can be achieved, at the expense however of the acceleration ratio (akin to 
Lemma~\ref{lemma:bidding.buffer} and the discussion following it). Second, the acceleration ratio 
improves rapidly as a function of the numbers (not as rapidly as in {\sc Ideal}, but still very fast).

\section{Experimental results}
\label{sec:experiments}
\newcommand{\tableInterruptionAsAdvice}{
}

\newcommand{\tableQuerySettingLargeP}{

\begin{table}[!h]
        \centering
        
\scalebox{.8}{
\begin{tabular}{|p{0.20\columnwidth}|p{0.13\columnwidth}|p{0.17\columnwidth}|p{0.17\columnwidth}|p{0.17\columnwidth}|p{0.17\columnwidth}|}
\hline 
  & $\displaystyle \eta \ =\ 0.1$ & $\displaystyle \eta \ =\ 0.15$ & $\displaystyle \eta \ =\ 0.2$ & $\displaystyle H\ =\ 0.2$5 & $\displaystyle H\ =\ 0.30$ \\
\hline 
 advantage & 0.983 & 0.983 & 0.983 & 0.976 & 0.963 \\
\hline 
 strong advantage & 0.703 & 0.673 & 0.610 & 0.510 & 0.366 \\
 \hline
\end{tabular}
}\caption{In the query setting, the adaptive algorithm outperform baseline algorithm for almost all values of deadline. The top (respectively bottom) row indicate the percentage of the interruption times for which the adaptive algorithm completes a longer contract (respectively a contract that is at least 20 percent longer) than the baseline algorithm.}        
        \end{table}
}

\newcommand{\tableQuerySettingSmallP}{

\begin{table}[!h]
        \centering
        
\scalebox{.8}{
\begin{tabular}{|p{0.25\columnwidth}|p{0.25\columnwidth}|p{0.25\columnwidth}|p{0.25\columnwidth}|}
\hline 
  & $\displaystyle p\ =\ 0.0$ & $\displaystyle p\ =\ 0.05$ & $\displaystyle p\ =\ 0.07$ \\
\hline 
 advantage & 0.623 & 0.813 & 0.876 \\
\hline 
 strong advantage & 0.520 & 0.696 & 0.723 \\
 \hline
\end{tabular}
}\caption{In the query setting, the oblivious algorithms (with $p<\eta$) still outperform baseline algorithm for most values of deadline. The top (respectively bottom) row indicate the percentage of the interruption times for which the oblivious algorithms completes a longer contract (respectively a contract that is at least 20 percent longer) than the baseline algorithm.}        /        
        \end{table}

}

\newcommand{\tableQueryAdaptive}{

\begin{table}[!h]
        \centering        
\scalebox{.77}{
\begin{tabular}{|p{0.20\columnwidth}|p{0.13\columnwidth}|p{0.17\columnwidth}|p{0.17\columnwidth}|p{0.17\columnwidth}|p{0.17\columnwidth}|}
\hline 
  & H = 0.0 & H = 0.05 & H = 0.1 & H = 0.2 & H = 0.3 \\
\hline 
 advantage & 0.983 & 0.953 & 0.936 & 0.883 & 0.810 \\
\hline 
 strong advantage & 0.776 & 0.736 & 0.693 & 0.593 & 0.453 \\
 \hline
\end{tabular}}
        \caption{The percentage of deadlines for which the adaptive \textsc{Robust} schedule completes a longer contract than the baseline algorithm (top) or completes a contract that is at least 20 percent longer than the baseline algorithm (bottom).\label{tableQueryAdaptive} }
        \end{table}

}

\newcommand{\tableQueryObli}{

\begin{table}[!h]
        \centering        
\scalebox{.77}{
\begin{tabular}{|p{0.20\columnwidth}|p{0.13\columnwidth}|p{0.17\columnwidth}|p{0.17\columnwidth}|p{0.17\columnwidth}|p{0.17\columnwidth}|}
\hline 
  & p = 0.0 & p = 0.05 & p = 0.1 & p = 0.2 & p = 0.3 \\
\hline 
 advantage & 0.703 & 0.896 & 0.936 & 0.853 & 0.750 \\
\hline 
 strong advantage & 0.593 & 0.723 & 0.696 & 0.586 & 0.476 \\
 \hline
\end{tabular}}
        \caption{The percentage of deadlines for which the oblivious \robust schedules with different parameters $p$ complete a longer contract than the baseline algorithm (top) or complete a contract that is at least 20 percent longer than the baseline algorithm (bottom).\label{tableQueryObli} }
        \end{table}

}
In this section, we present the experimental evaluation of our schedules. We use exponential schedules (without any prediction) as the baseline for our comparisons. Recall that for any $r\geq 4$, any exponential schedule $(a^i)$ with base $a \in [c_r,b_r]$ has robustness at most $r$. For the special but important case of $r=4$, there is only one such schedule with $a=2$. We report results for $r=4$, 
but we note that for $r>4$ the experiments show the same trends.

\subsection{Interruption time as prediction}

We model $\tau\in [T-H,T+H]$ to be a random, normal variable with mean $T$ and standard deviation 1, such that
$\eta \leq H$. Recall that an $H$-aware schedule knows $H$, whereas an $H$-oblivious one does not.
Figure~\ref{NewAccRatioNormalNoiseEta01} depicts the average acceleration ratio (y-axis) of the 
schedule $X^*_{\tau(1-p)}$ for different values of $p$, as a function of the interruption time $T$ (x-axis), 
for fixed $H=0.1$. The plot depicts the performance of four schedules: the $H$-aware schedule, in which $p=H$, and three $H$-oblivious schedules for $p=0.05$, $p=0.2$ and $p=0.3$. We run the experiment over 1,000 evenly spaced values of the interruption time in the interval $[2,2^{20}]$. For each value of the interruption time, the expectation is taken over 1,000 random values of the error. 

The figure shows that the $H$-aware schedule ($p=0.1$) has an advantage over the schedules with different values of $p$. In particular, the expected value of the acceleration ratio of this schedule is around 2.23 for all values of the interruption $T$, compared to acceleration ratios of 2.41 for the schedule with buffer smaller than $H$ ($p=0.05)$ and ratios 2.49 and 2.85 of the schedule whose buffer is larger than $H$ ($p=0.2,0.3$, respectively). As $p$ decreases, the fluctuation of the acceleration ratio due to the random noise increases, since the interruption becomes closer to the completion time of a contract. 

\begin{figure}[!b]
\centering
\hspace*{-8mm}\includegraphics[width = 0.8\columnwidth]{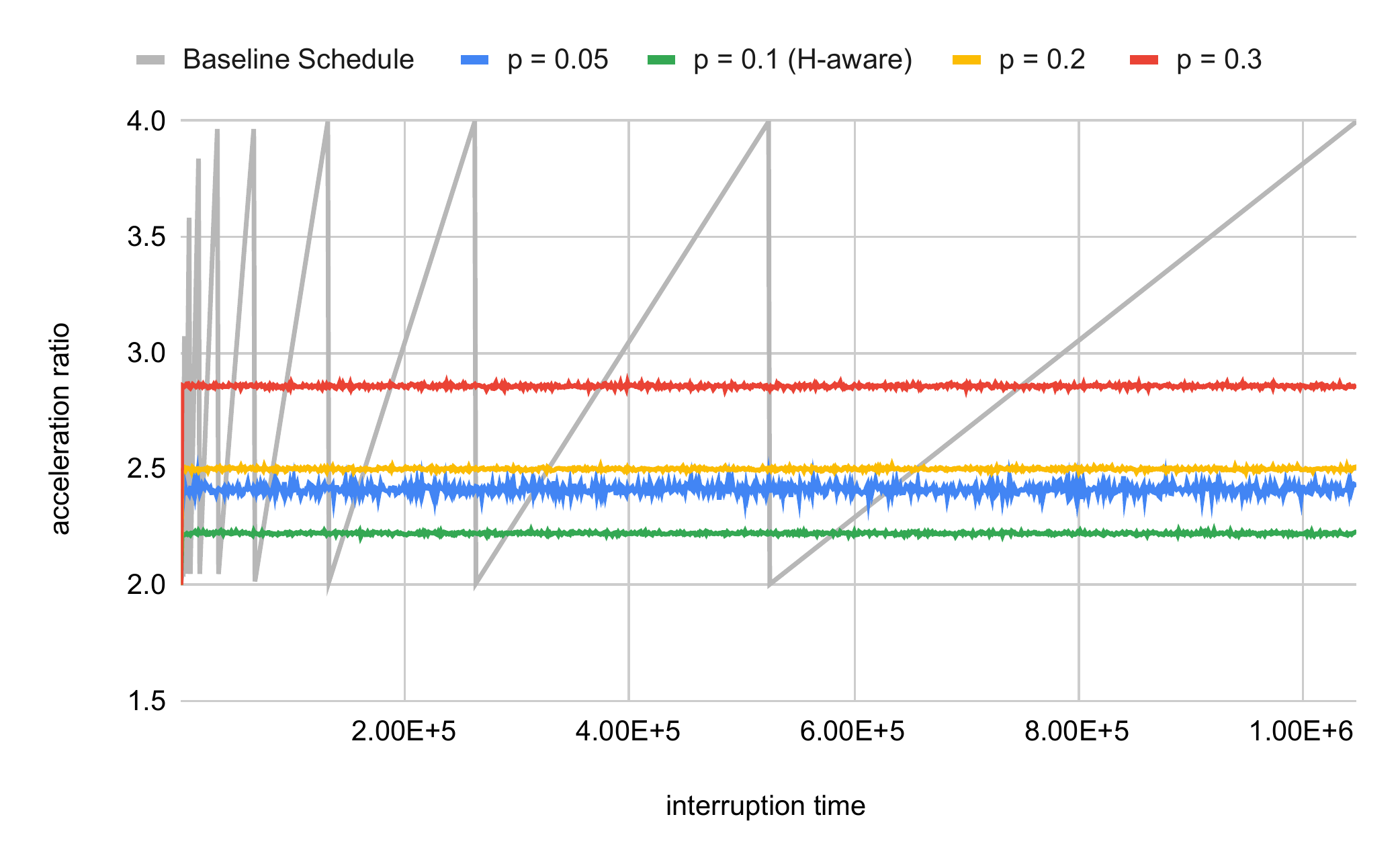}
\caption{Acceleration ratios of $X^*_{\tau(1-p)}$, for $H=0.1$.} 
\label{NewAccRatioNormalNoiseEta01}
\end{figure}

As Figure~\ref{NewAccRatioNormalNoiseEta01} shows, our schedules with predictions do not outperform the baseline algorithm for every interruption. This is to be expected, since there is no schedule that can dominate any other schedule. More precisely, even a schedule of very bad robustness (e.g., a schedule with a huge contract early on)
will have excellent acceleration ratio for some range of interruptions (e.g., for certain interruptions before the 
completion time of the huge contract). Nevertheless, we can quantify the advantage of the schedules with predictions, as shown in Table~\ref{tableInterruptionAsAdvice}. The table depicts the percentage of interruptions in $[2,2^{20}]$ for which $X^*_{\tau(1-p)}$ outperforms the baseline schedule, as well as the percentage of interruptions for which the improvement is significant (at least by 20\%).
As expected, the $H$-aware schedule yields the best improvements, but even the $H$-oblivious schedules tend to perform much better than the baseline schedule. The conclusion is that while 
$H$-awareness yields a clear improvement, it is not indispensable.

Similar conclusions can be drawn for different values of $H$, but as $H$ increases, the acceleration ratios of the schedules $X^*_{\tau(1-p)}$ also smoothly increase, as expected. In addition, similar results are obtained for $\tau$ uniformly at random in $[T-H,T+H]$.

\begin{table}[!h]
        \centering
\scalebox{.85}{        
\begin{tabular}{|p{0.3\columnwidth}|p{0.16\columnwidth}|p{0.16\columnwidth}|p{0.15\columnwidth}|p{0.15\columnwidth}|}
\hline 
  & $p = 0.05$ & $p = 0.1$ & $p = 0.2$ & $p = 0.3$ \\%
\hline \hline
 improvement & 79.22\% & 88.71\% & 74.73\% & 57.04\%  \\
\hline 
 strong improv. & 55.24\% & 66.43\% & 50.05\% & 28.47\% \\
 \hline
\end{tabular}
}
\caption{Percentage of interruptions in $[2,2^{20}]$ for which $X_{\tau(1-p)}$ outperforms the baseline schedule. 
}\label{tableInterruptionAsAdvice}        
\end{table}

\subsection*{Binary predictions}
We evaluate experimentally the performance of \robust (as mentioned earlier, {\sc Ideal} is not a practical schedule, and thus we do not implement it). We fix the number $n$ of queries to be equal to 100, and as in the previous setting, we also set $H=0.1$. Given a binary prediction of size 100, we generate a noisy prediction by flipping a fraction $\eta$ of the 100 bits (rounded down) where $\eta$ is chosen uniformly at random in $[0,H]$. 
Figure~\ref{new_query_acc_ratios} depicts the average acceleration ratio (y-axis) of \robust for different values of the parameter $p$, as a function of the interruption time $T$ (x-axis). As earlier, the expectation is taken over 1,000 random values of the error, and the interruption time takes values in the interval $[2, 2^{20}]$.

We consider \robust with four values of the parameter $p$, namely $p\in \{0.05, 0.1, 0,2, 0.3\}$. Note that the theoretical upper bound of Theorem~\ref{thm:oof} applies only if $p \geq 0.1$ in this setting. For such values of $p$, the acceleration ratio is a ``saw-like'' function of the interruption. There are some ``critical'' interruptions at which the acceleration ratio drops, then gently increases until the next critical interruption, as shown in 
Figure~\ref{new_query_acc_ratios}. The acceleration ratio of \robust also increases with $p$, as predicted by Theorem~\ref{thm:oof}, but is much smaller than the baseline acceleration ratio; for instance, for $p=0.3$, it fluctuates in the interval 
$[2.4,2.6]$. Note also that even for $p=0.05<H$, \robust performs better than the baseline schedule, which is interesting because such a case is not captured by Theorem~\ref{thm:oof}. This implies that \robust may work in practice for a wider range of values of $p$ than predicted by the theorem, and that \robust need not be $H$-aware to perform well. 

\begin{figure}[!t]
\centering
\hspace*{-4mm}\includegraphics[width = 0.8\columnwidth]{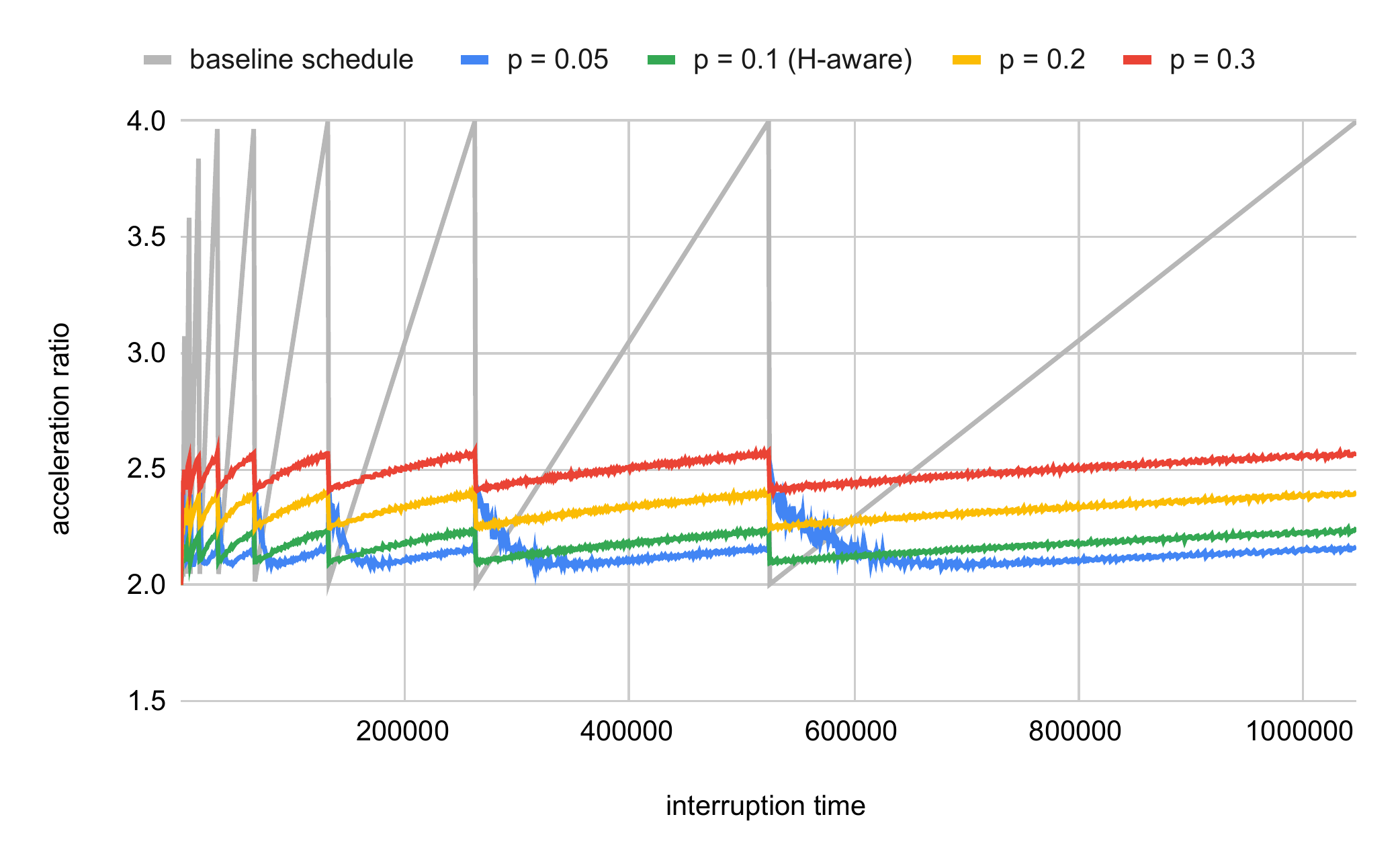}
\caption{Acceleration ratios of \robust, for $H=0.1$.} 
\label{new_query_acc_ratios}
\end{figure}

To quantify the above observation, in Table~\ref{tableRobust} we report the performance gain of \robust for different values of $p$, and fixed $H=0.1$. Once again, the table shows the percentage of interruptions in the range $[2,2^{20}]$ for which \robust outperforms the baseline schedule, as well as the percentage in which the performance gain is significant (at least 20\%).

\begin{table}[!h]
        \centering
\scalebox{.85}{        
\begin{tabular}{|p{0.3\columnwidth}|p{0.17\columnwidth}|p{0.17\columnwidth}|p{0.15\columnwidth}|p{0.15\columnwidth}|}
\hline  
  & $p = 0.05$ & $p = 0.1$ & $p = 0.2$ & $p = 0.3$ \\%
\hline \hline
 improvement        & 89.81\% & 94.25\% & 86.07\% & 77.07\%  \\
\hline 
 strong improv. & 74.33\% & 70.98\% & 60.94\% & 49.95\% \\
 \hline
\end{tabular}
}
\caption{Percentage of interruptions in $[2,2^{20}]$ for which \robust outperforms the baseline schedule. 
}\label{tableRobust}        
\end{table}

\section{Conclusion}

It is intriguing that a problem with a very simple statement, namely contract scheduling under the acceleration ratio, turns out to be quite challenging in the setting of predictions. We explored the tradeoffs between the prediction accuracy, the acceleration ratio, the consistency and the robustness of schedules in two natural settings of prediction. In future work, we would like to study the multi-instance setting, as discussed in the introduction, for which a lot of work has been done in the standard framework of no predictions. 

Another direction is to investigate connections between contract scheduling and {\em online searching on the line}, with untrusted hints. A very recent work~\cite{angelopoulos2020online} studied this problem assuming that the hints are either adversarially generated, or trusted and thus guaranteed to be correct. The techniques we developed and the results we showed in this paper should be readily applicable in searching with noisy, erroneous hints, given the known connections between contract scheduling and searching under the competitive ratio~\cite{steins,spyros:ijcai15}.

\bibliographystyle{plain}
\bibliography{targets-arxiv}

\begin{thebibliography}{10}

\bibitem{spyros:ijcai15}
Spyros Angelopoulos.
\newblock Further connections between contract-scheduling and ray-searching
  problems.
\newblock In {\em Proceedings of the 24th International Joint Conference on
  Artificial Intelligence (IJCAI)}, pages 1516--1522, 2015.

\bibitem{angelopoulos2020online}
Spyros Angelopoulos.
\newblock Online search with a hint.
\newblock In {\em Proceedings of the 12th International Conference on
  Innovations in Theoretical Computer Science (ITCS)}, 2021.
\newblock To appear. Available as arXiv:2008.13729.

\bibitem{DBLP:conf/innovations/0001DJKR20}
Spyros Angelopoulos, Christoph D{\"{u}}rr, Shendan Jin, Shahin Kamali, and
  Marc~P. Renault.
\newblock Online computation with untrusted advice.
\newblock In {\em Proceedings of the 11th International Conference on
  Innovations in Theoretical Computer Science (ITCS)}, pages 52:1--52:15, 2020.

\bibitem{DBLP:conf/ijcai/0001J19}
Spyros Angelopoulos and Shendan Jin.
\newblock Earliest-completion scheduling of contract algorithms with end
  guarantees.
\newblock In {\em Proceedings of the 28th International Joint Conference on
  Artificial Intelligence, {(IJCAI)}}, pages 5493--5499, 2019.

\bibitem{ALO:multiproblem}
Spyros Angelopoulos and Alejandro L\'{o}pez-Ortiz.
\newblock Interruptible algorithms for multi-problem solving.
\newblock In {\em Proceedings of the 21st International Joint Conference on
  Artificial Intelligence (IJCAI)}, pages 380--386, 2009.

\bibitem{soft-contracts}
Spyros Angelopoulos, Alejandro L\'opez-Ortiz, and Angele Hamel.
\newblock Optimal scheduling of contract algorithms with soft deadlines.
\newblock In {\em Proceedings of the 23rd AAAI Conference on Artificial
  Intelligence (AAAI)}, pages 868--873, 2008.

\bibitem{antoniadis2020online}
Antonios Antoniadis, Christian Coester, Marek Elias, Adam Polak, and Bertrand
  Simon.
\newblock Online metric algorithms with untrusted predictions.
\newblock In {\em Proceedings of the 37th International Conference on Machine
  Learning {(ICML)}}, pages 11453--11463, 2020.

\bibitem{steins}
Daniel~S. Bernstein, Lev Finkelstein, and Shlomo Zilberstein.
\newblock Contract algorithms and robots on rays: Unifying two scheduling
  problems.
\newblock In {\em Proceedings of the 18th International Joint Conference on
  Artificial Intelligence (IJCAI)}, pages 1211--1217, 2003.

\bibitem{BPZF.2002.scheduling}
Daniel~S. Bernstein, T.~J. Perkins, Shlomo Zilberstein, and Lev Finkelstein.
\newblock Scheduling contract algorithms on multiple processors.
\newblock In {\em Proceedings of the 18th AAAI Conference on Artificial
  Intelligence (AAAI)}, pages 702--706, 2002.

\bibitem{deliberation:boddy.dean}
Mark Boddy and Thomas~L. Dean.
\newblock Deliberation scheduling for problem solving in time-constrained
  environments.
\newblock {\em Artif. Intell.}, 67(2):245--285, 1994.

\bibitem{ChrKen06}
Marek Chrobak and Claire Kenyon{-}Mathieu.
\newblock {SIGACT} news online algorithms column 10: Competitiveness via
  doubling.
\newblock {\em {SIGACT} News}, 37(4):115--126, 2006.

\bibitem{gollapudi2019online}
Sreenivas Gollapudi and Debmalya Panigrahi.
\newblock Online algorithms for rent-or-buy with expert advice.
\newblock In {\em Proceedings of the 36th International Conference on Machine
  Learning {(ICML)}}, pages 2319--2327, 2019.

\bibitem{Horvitz:reasoning}
Eric Horvitz.
\newblock Reasoning about beliefs and actions under computational resource
  constraints.
\newblock {\em Int. J. Approx. Reasoning}, 2(3):337--338, 1988.

\bibitem{kupavskii2018lower}
Andrey Kupavskii and Emo Welzl.
\newblock Lower bounds for searching robots, some faulty.
\newblock In {\em Proceedings of the 37th ACM Symposium on Principles of
  Distributed Computing {(PODC)}}, pages 447--453, 2018.

\bibitem{lattanzi2020online}
Silvio Lattanzi, Thomas Lavastida, Benjamin Moseley, and Sergei Vassilvitskii.
\newblock Online scheduling via learned weights.
\newblock In {\em Proceedings of the 30th ACM-SIAM Symposium on Discrete
  Algorithms {(SODA)}}, pages 1859--1877, 2020.

\bibitem{aaai06:contracts}
Alejandro L\'opez-Ortiz, Spyros Angelopoulos, and Angele Hamel.
\newblock Optimal scheduling of contract algorithms for anytime
  problem-solving.
\newblock {\em J. Artif. Intell. Res.}, (51):533--554, 2014.

\bibitem{DBLP:conf/icml/LykourisV18}
Thodoris Lykouris and Sergei Vassilvitskii.
\newblock Competitive caching with machine learned advice.
\newblock In {\em Proceedings of the 35th International Conference on Machine
  Learning {(ICML)}}, pages 3302--3311, 2018.

\bibitem{NIPS2017_7161}
Arya Mazumdar and Barna Saha.
\newblock Clustering with noisy queries.
\newblock In {\em Annual Conference on Neural Information Processing Systems
  {(NIPS)}}, volume~30, pages 5788--5799. 2017.

\bibitem{DBLP:conf/innovations/Mitzenmacher20}
Michael Mitzenmacher.
\newblock Scheduling with predictions and the price of misprediction.
\newblock In {\em Proceedings of the 11th Innovations in Theoretical Computer
  Science Conference {(ITCS)}}, volume 151, pages 14:1--14:18, 2020.

\bibitem{NIPS2018_8174}
Manish Purohit, Zoya Svitkina, and Ravi Kumar.
\newblock Improving online algorithms via {ML} predictions.
\newblock In {\em Annual Conference on Neural Information Processing Systems
  {(NIPS)}}, volume~31, pages 9661--9670, 2018.

\bibitem{DBLP:conf/soda/Rohatgi20}
Dhruv Rohatgi.
\newblock Near-optimal bounds for online caching with machine learned advice.
\newblock In {\em Proceedings of the 30th {ACM-SIAM} Symposium on Discrete
  Algorithms {(SODA)}}, pages 1834--1845, 2020.

\bibitem{RZ.1991.composing}
Stuart~J. Russell and Shlomo Zilberstein.
\newblock Composing real-time systems.
\newblock In {\em Proceedings of the 12th International Joint Conference on
  Artificial Intelligence (IJCAI)}, pages 212--217, 1991.

\bibitem{ZilbersteinCC03}
Shlomo Zilberstein, Francois Charpillet, and Philippe Chassaing.
\newblock Optimal sequencing of contract algorithms.
\newblock {\em Ann. Math. Artif. Intell.}, 39(1-2):1--18, 2003.

\bibitem{DBLP:journals/ai/ZilbersteinR96}
Shlomo Zilberstein and Stuart~J. Russell.
\newblock Optimal composition of real-time systems.
\newblock {\em Artif. Intell.}, 82(1-2):181--213, 1996.

\end{thebibliography}

\end{document}